\newtheorem{theorem}{Theorem}
\DeclareMathOperator*{\argmin}{arg\,min}
\title{Mix- and MoE-DPO: A Variational Inference Approach to Direct Preference Optimization}
\author{
  Jason Bohne\thanks{ Equal contribution.} \\
  Department of Applied Mathematics and Statistics\\
  Stony Brook University\\
  Stony Brook, NY 11794\\
  \texttt{jason.bohne@stonybrook.edu}
  \And
  Pawe\l{} Polak\footnotemark[1] \\
  Department of Applied Mathematics and Statistics\\
  Institute for Advanced Computational Science \\
  Center of Excellence in Wireless and Information Technology (CEWIT)\\
  AI Innovation Institute\\
  Stony Brook University\\
  Stony Brook, NY 11794\\
  \texttt{pawel.polak@stonybrook.edu}
  \And
  David Rosenberg\\
  Bloomberg\\
  Toronto, ON M5J 2S1\\
  \texttt{drosenberg44@bloomberg.net}
  \And
  Brian Bloniarz\\
  Bloomberg\\
  San Francisco, CA 94105\\
  \texttt{bbloniarz@bloomberg.net}
  \And
  Gary Kazantsev\\
  Bloomberg\\
  New York, NY 10022\\
  \texttt{gkazantsev@bloomberg.net}
}
\newtheorem{corollary}{Corollary}[theorem]
\newtheorem{lemma}[theorem]{Lemma}
\newtheoremstyle{assumptionstyle} 
  {3pt}
  {3pt} 
  {\itshape} 
  {} 
  {\bfseries} 
  {.} 
  {.5em} 
  {} 
\theoremstyle{assumptionstyle}
\begin{document}
\maketitle

\begin{abstract}
Direct Preference Optimization (DPO) has recently emerged as a simple and effective alternative to reinforcement learning from human feedback (RLHF) for aligning large language models (LLMs) with user preferences. However, existing DPO formulations rely on a single monolithic model, which limits their expressivity in multi-task settings and their adaptability to heterogeneous or diverse preference distributions. In this work, we propose Mix- and MoE-DPO, a framework that extends DPO with both soft mixture models and mixture-of-experts (MoE) architectures, using a stochastic variational inference approach. Our method introduces a latent-variable model over expert assignments and optimizes a variational evidence lower bound (ELBO), enabling stable and efficient learning of specialized expert policies from preference data. Mix- and MoE-DPO provides three key advantages over standard DPO: (i) generalization via universal function approximation through mixtures; (ii) reward and policy specialization through expert components tailored to distinct preference modes; and (iii) contextual alignment through input-dependent soft gating that enables user-specific mixture policies. Our framework supports both shared base architectures with expert-specific policy heads and fully independent expert models, allowing flexible trade-offs between parameter efficiency and specialization. We validate our approach on a variety of model sizes and multi-preference datasets, demonstrating that Mix- and MoE-DPO offers a powerful and scalable method for preference-based LLM alignment.
\end{abstract}

\section{Introduction}

Aligning large language models (LLMs) with human preferences is a central objective in building safe and reliable AI systems. Direct Preference Optimization (DPO) \cite{rafailov2023direct} has emerged as a widely adopted and computationally efficient alternative to Reinforcement Learning from Human Feedback (RLHF) \cite{christiano2017deep,ouyang2022training,lee2023rlaif} for the alignment of LLMs. Using a direct optimization framework over preference pairs, DPO avoids the need for explicit reward modeling and optimization complexities required in RLHF methods, while still achieving competitive performance with traditional methods.

Despite its advantages, standard DPO methods are still inherently limited by their reliance on a single, monolithic policy. In scenarios with multi-expert or heterogeneous preferences arising from varied user groups, task domains, or annotation styles, this uniformity restricts expressivity and may induce suboptimal alignment. Although extensions of DPO \cite{azar2024general,wu2024,morimura2024filtered,gu2025mask,shi2024direct,lai2024step,zeng2024token,liu2024provably} offer improvements, such modifications to date have been limited to a single-policy framework.

To address the limitations of single-policy preference optimization, we propose a modular framework for aligning large language models (LLMs) with heterogeneous human preferences. While Direct Preference Optimization (DPO) has demonstrated strong empirical performance, its monolithic structure limits its capacity to represent diverse user behaviors, task-specific feedback, or multi-objective alignment. This limitation is particularly significant in light of the growing ecosystem of domain-specialized LLMs, which offer reusable components for structured preference modeling.

We introduce a mixture-based generalization of DPO by modeling the policy as a latent mixture over expert components. Each expert specializes in a distinct preference mode, and the overall model is trained via a variational inference procedure grounded in a Mixture-of-Bradley--Terry (MBT) likelihood. This framework establishes a component-wise policy–reward equivalence and accommodates two architectural regimes: (i) expert-specific heads on a shared encoder for parameter-efficient sharing, and (ii) independently parameterized expert models for maximal specialization. In both cases, expert policies can be initialized from task-specific pretrained heads or independent models for fine-tuning or deployment in a zero-shot setting, enabling efficient and scalable adaptation.

To optimize the mixture model, we propose a stochastic variational EM algorithm that alternates between inferring expert responsibilities and updating policies and rewards. We instantiate this in two variants: \emph{Mix-DPO}, with fixed mixture weights updated via posterior averaging, and \emph{MoE-DPO}, which uses a soft gating network to assign input-dependent or user-specific weights. Expert policies are trained via closed-form updates minimizing expert-specific KL-regularized objectives, while the gating network is optimized to match predicted weights to inferred posteriors.

Our framework is designed for compatibility with modular deployment in real-world systems. Expert components can be swapped, added, or reused with minimal retraining. In shared-encoder settings, new capabilities can be added efficiently via head specialization. Moreover, MoE-DPO supports user personalization by conditioning the gating network on user metadata, allowing for user-specific mixtures without modifying the expert policies. This flexibility makes the framework well suited for scalable alignment in heterogeneous and multi-task environments.\\

\textbf{Contributions.} This work makes the following theoretical and empirical contributions:\\
\textbf{Multi-Expert Preference Alignment.} We generalize DPO to a latent mixture of expert policies, combining a Mixture-of-Bradley--Terry model with KL-regularized reward-based objectives.\\
\textbf{Variational Training Algorithm.} We develop a scalable variational EM algorithm for modular policy optimization under latent expert assignments, preserving closed-form policy updates.\\
\textbf{Multi-Reward Generalization with Mix-DPO.} We show that Mix-DPO improves generalization across multiple reward signals by enabling posterior specialization of expert policies.\\
\textbf{Contextual Multi-Task Alignment with MoE-DPO.} We demonstrate that MoE-DPO enables effective user- or task-specific routing via input-dependent gating, yielding improved multi-domain alignment.\\
\textbf{Modular Deployment and Personalization.} We highlight that our framework supports integration of pretrained models and can enable efficient customization without retraining via gating.

The paper is structured as follows. Section \ref{sec:model} provides our mixture-model formulation for the policy, reward, and preference model. Section \ref{sec:algorithm} outlines our variational training framework and special cases of Mix-DPO and MoE-DPO. Section \ref{sec:experiments} presents experimental results for preference alignment of language models on a variety of tasks. Thorough related work on preference alignment methods and variational inference, along with proofs and additional experimental results, are in the Appendix.\\

\textbf{Notation.} We use the following notation throughout. Prompt–response–preference triplets are denoted as \( \{(x_i, y_i^+, y_i^-)\}_{i=1}^n \sim \mathcal{D} \), where \( y_i^+ \succ y_i^- \) indicates the preferred response for prompt \( x_i \sim p \). The base policy is denoted \( \pi(y \mid x) \), the reward function is \( r(x, y) \), and the temperature parameter is \( \beta \). The latent expert index is denoted \( z \in \{1, \ldots, K\} \), with conditional prior \( p(z = k \mid x) = w_k(x) \), where \( w_k(x) \) is the mixture weight assigned to expert \( k \). Accordingly, the expert-specific policy is \( \pi_k(y \mid x) \), the expert reward is \( r_k(x, y) \), and the reference policy is \( \pi_{\mathrm{ref}(k)}(y \mid x) \). The variational posterior over expert assignments for a preference triplet is denoted \( q_k(x, y^+, y^-) \).

\section{Multi-Expert Preference Alignment}\label{sec:model}

Let \( \{(x_i, y_i^+, y_i^-)\}_{i=1}^n \) denote a dataset of preference triplets, where each prompt \( x_i \sim p \) is associated with two responses sampled from a policy \( \pi \), and \( y_i^+ \) is preferred over \( y_i^- \). To support modular specialization and better capture heterogeneous user preferences or task distributions, we extend DPO to a mixture-of-experts formulation with input-dependent gating:
\begin{equation}\label{eq:mixpolicy}
\pi(y \mid x) := \sum_{k=1}^K w_k(x) \pi_k(y \mid x),
\end{equation}
where each \( \pi_k \) is an expert policy and \( w_k(x) \geq 0 \) are gating weights satisfying \( \sum_{k=1}^K w_k(x) = 1 \). The gating function assigns context-dependent responsibilities to expert components, enabling the model to adaptively route different inputs to specialized policies. This structure increases expressivity while preserving compositional modularity and aligns with recent empirical evidence that combining specialized models enhances generalization in multi-domain LLM settings~\cite{wang2024interpretable,xu2024perfect}.

To define the reward associated with the policy in~\eqref{eq:mixpolicy}, we aggregate the expert-specific rewards \( r_k(x, y) \) using a softmax-style combination:
\begin{equation}\label{eq:mix_exp_reward}
r(x, y) := \log \left( \sum_{k=1}^K w_k(x) \exp(r_k(x, y)) \right).
\end{equation}
This reward corresponds to a smooth maximum over the expert rewards and recovers the single-expert case when the gating weights are deterministic; i.e.,~\eqref{eq:mix_exp_reward} reduces to \( r(x,y)=\sum_{k=1}^K w_k(x) r_k(x,y) \) when \( w_k(x) \in \{0,1\} \).

We assume the number of mixture components in the policy and reward spaces is equal, with shared gating function \( w_k(x) \) across \( k = 1, \ldots, K \). This modeling choice reflects the assumption that the same latent structure governs both generation and evaluation: the specific type of expert responsible for generating a response is also responsible for assessing its quality by assigning a reward to the output. This alignment ensures consistency between policy optimization and reward modeling and is particularly realistic in settings where expert components reflect stable sources of heterogeneity. Examples of such heterogeneity include user types, task domains, or annotation protocols, as considered in our experiments.

Analogous to the original DPO formulation, our training objective maximizes the expected reward while applying KL regularization. Namely, for a given regularization strength \( \beta > 0 \), we optimize:
\begin{equation}\label{eq:DPO_Training_Objective}
\max_{\{w_k, \pi_k\}} \mathbb{E}_{x \sim p,\, y \sim \pi(y\mid x)} \left[ r(x, y) \right]
- \beta \sum_{k=1}^K \mathbb{E}_{x \sim p} \left[ w_k(x) D_{\mathrm{KL}} \left( \pi_k(y \mid x) \,\|\, \pi_{\text{ref}(k)}(y \mid x) \right) \right].
\end{equation}
Since the model supports expert specialization, the regularization term is applied to each expert individually, encouraging proximity to its corresponding reference policy. The gating weights \( w_k(x) \) modulate the strength of this penalty based on the expert's responsibility for each input, thereby promoting prompt- (or more general user-context-) dependent regularization. Moreover, analogous to the original DPO formulation, the combined structure of \eqref{eq:mixpolicy}, \eqref{eq:mix_exp_reward}, and \eqref{eq:DPO_Training_Objective} admits a closed-form solution for the optimal policy in terms of the reward, as shown in Theorem~\ref{lemma:Optimal Expert Policy under KL-Regularized Objective}.

Before deriving specific properties, we formalize the second core component of our model: reward learning consistent with the structure of our mixture preference model. The classical Bradley--Terry (BT) model defines the probability of preferring \( y^+ \) over \( y^- \) in context \( x \) using a single latent reward function \( r(x, y) \):
\[
\mathbb{P}(y^+ \succ y^- \mid x) = \frac{e^{r(x, y^+)}}{e^{r(x, y^+)} + e^{r(x, y^-)}}.
\]
This formulation arises from a logistic likelihood and is typically fit via maximum likelihood estimation over observed preference triplets \( (x_i, y_i^+, y_i^-) \sim \mathcal{D} \):
\[
\max_{r} \sum_i \log \left( \frac{\exp(r(x_i, y^+_i))}{\exp(r(x_i, y^+_i)) + \exp(r(x_i, y^-_i))} \right).
\]

To incorporate the modular structure introduced by the policy~\eqref{eq:mixpolicy}, we extend the classical BT model to account for expert-specific rewards. In particular, since the policy is defined as a mixture over expert policies \( \pi_k \) with gating weights \( w_k(x) \), it is natural to model preferences under a mixture of corresponding reward functions \( r_k \). This leads to the Mixture-of-Bradley--Terry (MBT) model, where the latent expert index \( z \in \{1, \ldots, K\} \) governs which reward function is used to evaluate the pairwise preference.

Conditioned on expert \( z = k \), the preference likelihood follows the standard BT model:
\begin{equation}\label{eq:conditional likelihood under expert k}
\mathbb{P}(y^+ \succ y^- \mid x, z=k) = \frac{e^{r_k(x, y^+)}}{e^{r_k(x, y^+)} + e^{r_k(x, y^-)}}.
\end{equation}
Marginalizing over the latent expert assignment according to the gating distribution \( w_k(x) = p(z = k \mid x) \), we obtain the MBT model:
\[
\mathbb{P}(y^+ \succ y^- \mid x) = \sum_{k=1}^K w_k(x)  \frac{\exp(r_k(x, y^+))}{\exp(r_k(x, y^+)) + \exp(r_k(x, y^-))}.
\]
We learn the reward functions \( \{r_k\} \) by minimizing the negative log-likelihood of the observed preferences under the marginal model. This corresponds to maximum marginal likelihood under the latent-variable formulation of the MBT model:
\begin{equation}\label{eq:MBT_Loss_def}
\mathcal{L}_{\text{MBT}} = - \mathbb{E}_{(x, y^+, y^-) \sim \mathcal{D}} \left[ \log \sum_{k=1}^K w_k(x) \frac{\exp(r_k(x, y^+))}{\exp(r_k(x, y^+)) + \exp(r_k(x, y^-))} \right].
\end{equation}

Direct optimization of the marginal log-likelihood in~\eqref{eq:MBT_Loss_def} leads to high instabilities in the gradients, obscures the latent structure of the model, and provides limited insight into expert specialization. To address this, we derive a variational evidence lower bound (ELBO) that decomposes the marginal likelihood into an expected per-expert log-probability term and a KL divergence between the variational posterior and the expert prior. This decomposition not only facilitates scalable optimization via stochastic gradient methods but also enables interpretation of expert responsibilities in terms of soft assignments over preference pairs.

\begin{theorem}[ELBO for the MBT Model]\label{theorem:ELBO for the MBT Model}
Let \( (x, y^+, y^-) \) be a preference triplet with \( y^+ \succ y^- \), and let \( z \in \{1, \ldots, K\} \) be a latent expert index with prior \( p(z = k \mid x) = w_k(x) \). Let \( \sigma_k(x, y^+, y^-)\) denote the Bradley--Terry likelihood under expert \( k \) given in \eqref{eq:conditional likelihood under expert k}. Then, for any variational distribution \( q(z \mid x, y^+, y^-) \) satisfying \( \sum_k q_k(x, y^+, y^-) = 1 \), we have:
\begin{align*}
\log \mathbb{P}(y^+ \succ y^- \mid x)
&\geq \sum_{k=1}^K q_k(x, y^+, y^-) \log \left( \frac{w_k(x) \, \sigma_k(x, y^+, y^-)}{q_k(x, y^+, y^-)} \right) \\
&= \mathbb{E}_{z \sim q} \left[ \log \sigma_z(x, y^+, y^-) \right] - D_{\mathrm{KL}}(q(z \mid x, y^+, y^-) \,\|\, p(z \mid x)).
\end{align*}
The bound is tight when \( q_k(x, y^+, y^-) = \frac{w_k(x) \, \sigma_k(x, y^+, y^-)}{\sum_{j=1}^K w_j(x) \, \sigma_j(x, y^+, y^-)} \).
\end{theorem}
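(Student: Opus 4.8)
The plan is to recognize this as the standard evidence-lower-bound construction specialized to the MBT marginal, so the entire argument rests on one application of Jensen's inequality followed by a bookkeeping decomposition. First I would write the marginal preference probability in its latent-variable form, namely $\mathbb{P}(y^+ \succ y^- \mid x) = \sum_{k=1}^K w_k(x)\,\sigma_k(x, y^+, y^-)$, which is exactly the MBT model stated before \eqref{eq:MBT_Loss_def}. I would then insert the variational distribution multiplicatively, rewriting the sum as an expectation under $q$:
\[
\log \mathbb{P}(y^+ \succ y^- \mid x) = \log \sum_{k=1}^K q_k(x, y^+, y^-)\, \frac{w_k(x)\,\sigma_k(x, y^+, y^-)}{q_k(x, y^+, y^-)}.
\]
Since the logarithm is concave and $q$ is a probability distribution over $\{1,\dots,K\}$ (by the constraint $\sum_k q_k = 1$), Jensen's inequality lets me exchange the $\log$ with the expectation, pushing it inside to obtain the first line of the claimed bound.

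Next I would establish the equality between the two displayed right-hand sides, which is purely algebraic. I would split the logarithm of the ratio inside the sum as $\log \sigma_k + \log(w_k/q_k)$, so that the first group collects into $\sum_k q_k \log \sigma_k = \mathbb{E}_{z \sim q}[\log \sigma_z(x, y^+, y^-)]$ and the second group is $\sum_k q_k \log(w_k(x)/q_k) = -\sum_k q_k \log(q_k / w_k(x)) = -D_{\mathrm{KL}}(q(z \mid x, y^+, y^-)\,\|\,p(z \mid x))$, using $p(z = k \mid x) = w_k(x)$. This recovers the second line exactly.

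For the tightness claim I would avoid invoking the equality case of Jensen directly and instead prove the sharper identity that the \emph{gap} between the log-marginal and the ELBO equals a KL divergence to the true posterior. Writing $p(z = k \mid x, y^+, y^-) = w_k(x)\sigma_k(x,y^+,y^-) / \sum_j w_j(x)\sigma_j(x,y^+,y^-)$, I would substitute the denominator $\sum_j w_j \sigma_j = \mathbb{P}(y^+ \succ y^- \mid x)$ and show, by adding and subtracting $\log \mathbb{P}(y^+ \succ y^- \mid x)$ inside the ELBO expression, that
\[
\log \mathbb{P}(y^+ \succ y^- \mid x) - \sum_{k=1}^K q_k \log \frac{w_k(x)\sigma_k(x,y^+,y^-)}{q_k} = D_{\mathrm{KL}}\bigl(q(z \mid x, y^+, y^-)\,\|\,p(z \mid x, y^+, y^-)\bigr) \geq 0.
\]
Because this KL divergence is nonnegative and vanishes if and only if $q$ equals the posterior, this single identity re-derives the inequality and simultaneously pins down the tight $q_k$ as the stated posterior. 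The only mild subtlety, and the step I would treat most carefully, is this substitution of the normalizer: one must verify that $\log \mathbb{P}(y^+ \succ y^- \mid x)$ is constant with respect to the summation index $k$ so that it factors out of $\sum_k q_k(\cdot) = 1$ cleanly, after which the gap collapses to the posterior KL with no residual terms.
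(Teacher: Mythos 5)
Your proposal is correct. The main bound follows the paper's route exactly: write the MBT marginal $\mathbb{P}(y^+ \succ y^- \mid x) = \sum_k w_k(x)\,\sigma_k(x,y^+,y^-)$, insert $q$ multiplicatively, apply Jensen's inequality to the concave logarithm, and split $\log\bigl(w_k\sigma_k/q_k\bigr)$ into $\log\sigma_k + \log(w_k/q_k)$ to obtain the expected log-likelihood minus the KL to the prior. Where you genuinely diverge is the tightness claim. The paper computes the true posterior $p(z=k\mid x,y^+,y^-) = w_k\sigma_k/\sum_j w_j\sigma_j$ via Bayes' rule and then simply asserts that the Jensen inequality becomes an equality when $q$ equals this posterior. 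You instead derive the exact gap identity
\[
\log \mathbb{P}(y^+ \succ y^- \mid x) - \sum_{k=1}^K q_k \log \frac{w_k(x)\,\sigma_k(x,y^+,y^-)}{q_k}
= D_{\mathrm{KL}}\bigl(q \,\|\, p(z \mid x, y^+, y^-)\bigr),
\]
which follows cleanly because $\log \mathbb{P}(y^+ \succ y^- \mid x)$ is independent of $k$ and hence factors through $\sum_k q_k = 1$ (your ``subtlety'' is indeed trivial for exactly this reason). This buys you two things the paper's argument does not make explicit: the inequality itself is re-derived from nonnegativity of KL, so Jensen becomes logically redundant, and the tightness condition is upgraded from ``the bound is tight when $q$ equals the posterior'' to an if-and-only-if characterization, since the KL gap vanishes precisely at the posterior. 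Both arguments are standard in variational inference; yours is the slightly sharper evidence-decomposition form, while the paper's is the more economical Jensen-plus-Bayes form.
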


The following corollary follows by applying Theorem~\ref{theorem:ELBO for the MBT Model} under the joint distribution over preference triplets sampled from the mixture policy. In this setting, the variational posterior \( q(z \mid x, y^+, y^-) \) can be chosen to match the true posterior \( p(z \mid x, y^+, y^-) \), since it admits a closed-form expression and is differentiable with respect to model parameters. This choice makes the KL divergence term in Theorem~\ref{theorem:ELBO for the MBT Model} vanish, resulting in a tight bound that coincides with the log-likelihood. Consequently,
\begin{corollary}[MBT Variational Loss Function]
\label{cor:mbt_loss}
Maximizing the variational lower bound from Theorem~\ref{theorem:ELBO for the MBT Model} over the preference distribution yields the MBT training loss that we aim to minimize:
\begin{align}
\mathcal{L}_{\mathrm{MBT}}
&= - \mathbb{E}_{(x, y^+, y^-) \sim \mathcal{D}}\left[\sum_{k=1}^K q_k(x, y^+, y^-) \log \frac{\exp(r_k(x, y^+))}{\exp(r_k(x, y^+)) + \exp(r_k(x, y^-))}\right]. \label{eq:mbt_loss}
\end{align}
\end{corollary}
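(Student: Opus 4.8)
The plan is to start from the variational lower bound of Theorem~\ref{theorem:ELBO for the MBT Model}, take its expectation over $\mathcal{D}$, and then identify which summands depend on the reward parameters $\{r_k\}$ that are being optimized. Writing the per-triplet bound in its decomposed form $\mathbb{E}_{z\sim q}[\log\sigma_z(x,y^+,y^-)] - D_{\mathrm{KL}}(q(z\mid x,y^+,y^-)\,\|\,p(z\mid x))$ and negating to pass from maximization of the ELBO to minimization gives a loss consisting of two pieces: a reward-dependent term and a divergence term.

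I would then invoke the tightness condition: with $q_k = w_k(x)\sigma_k/\sum_j w_j(x)\sigma_j$, the gap between the ELBO and $\log\mathbb{P}(y^+\succ y^-\mid x)$ --- which equals $D_{\mathrm{KL}}(q\,\|\,p(z\mid x,y^+,y^-))$ --- vanishes, so the bound coincides with the marginal log-likelihood in~\eqref{eq:MBT_Loss_def}. This establishes that working with the surrogate costs nothing in expressivity.

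The core of the argument is the separation of the reward-dependent and reward-independent parts. The divergence $D_{\mathrm{KL}}(q\,\|\,p(z\mid x)) = \sum_k q_k\log(q_k/w_k(x))$ inside the ELBO depends only on the responsibilities $q_k$ and the gating weights $w_k(x)$; it contains no $r_k$. Hence, when the reward-learning step optimizes $\{r_k\}$ with $q$ and $w_k$ held fixed (as in the E/M separation of the variational procedure), this term is an additive constant and drops from both the objective and its gradient. Equivalently, the surrogate in~\eqref{eq:mbt_loss} and the marginal negative log-likelihood in~\eqref{eq:MBT_Loss_def} differ only by $\mathbb{E}_{\mathcal{D}}[D_{\mathrm{KL}}(q\,\|\,p(z\mid x))]$, a reward-independent constant, so they share the same minimizer in $\{r_k\}$. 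Substituting $\log\sigma_k = \log\frac{\exp(r_k(x,y^+))}{\exp(r_k(x,y^+))+\exp(r_k(x,y^-))}$ into the surviving term yields exactly~\eqref{eq:mbt_loss}.

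The main obstacle is conceptual rather than computational: one must keep the two roles of KL divergence distinct. The gap $D_{\mathrm{KL}}(q\,\|\,\text{posterior})$ governs tightness and is driven to zero by the optimal choice of $q$, whereas the explicit penalty $D_{\mathrm{KL}}(q\,\|\,p(z\mid x))$ stays nonzero but is merely constant in the rewards. Being precise that the corollary's $\mathcal{L}_{\mathrm{MBT}}$ is the reward-dependent component of the negated ELBO --- and that dropping the KL penalty is justified by reward-independence, not by its vanishing --- is the delicate point to state cleanly.
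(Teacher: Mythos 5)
Your proof is correct, but it takes a genuinely different---and more careful---route than the paper's own justification. The paper gives no appendix proof of Corollary~\ref{cor:mbt_loss}; its argument is the paragraph preceding the statement, which asserts that choosing \( q \) to match the true posterior ``makes the KL divergence term in Theorem~\ref{theorem:ELBO for the MBT Model} vanish, resulting in a tight bound that coincides with the log-likelihood.'' Read literally, that conflates the two divergences you are careful to keep apart: what vanishes under the optimal \( q \) is the gap \( D_{\mathrm{KL}}\!\left(q \,\|\, p(z \mid x, y^+, y^-)\right) \), whereas the term actually displayed in Theorem~\ref{theorem:ELBO for the MBT Model} is \( D_{\mathrm{KL}}\!\left(q \,\|\, p(z \mid x)\right) \), which does not vanish (that would force the posterior to equal the prior). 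Moreover, tightness alone only shows that the negated ELBO equals the full marginal negative log-likelihood \eqref{eq:MBT_Loss_def}, which still contains the prior-KL and is therefore not \eqref{eq:mbt_loss}; an extra step is needed to discard that term. Your proposal supplies exactly that step: with \( q \) and \( w_k \) frozen during the reward/policy update, \( D_{\mathrm{KL}}(q \,\|\, w) = \sum_k q_k \log\left(q_k / w_k(x)\right) \) contains no \( r_k \), so it is an additive constant and \eqref{eq:mbt_loss} has the same minimizers as the negated (tight) ELBO. This is also the reading most consistent with the rest of the paper, which splits the ELBO into a reward-dependent term (optimized via \eqref{eq:mbt_loss} and \eqref{eq:DPO_k_policy_loss}) and a \( w \)-dependent KL term (optimized via \eqref{eq:wk_loss}) within Algorithm~\ref{alg:EM_MBT_short}. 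What each approach buys: the paper's is shorter but rests on a mechanism that is, at best, ambiguously stated; yours is rigorous and matches how the loss is actually used, at the modest cost of making the corollary explicitly conditional on the E/M separation of the variational procedure.
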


\textbf{Reward Decomposition and Expert Alignment:} We now analyze the structure of the objective in~\eqref{eq:DPO_Training_Objective} by decomposing the mixture reward \( r(x, y) \) into expert-specific contributions. This decomposition highlights the connection between policy inference and reward modeling in the presence of latent expert structure. In particular, we define two variational distributions that describe expert responsibilities under the policy and reward components, respectively:
\begin{align}
q^{(\pi)}_k(x, y) = \frac{w_k(x)\pi_k(y \mid x)}{\pi(y \mid x)}, \quad \text{and} \quad
q^{(r)}_k(x, y) = \frac{w_k(x) \exp(r_k(x, y))}{\sum_j w_j(x) \exp(r_j(x, y))}. \label{eq:reward_posterior}
\end{align}

The distribution \( q^{(\pi)} \) corresponds to the posterior over experts under the mixture policy, while \( q^{(r)} \) arises from the softmax aggregation of expert rewards. The following result expresses the mixture reward exactly in terms of the policy-induced posterior and its divergence from the reward-induced posterior.
\begin{theorem}[Equality Decomposition of Reward Mixture]\label{theorem:Equality Decomposition of Reward Mixture}
Let \( q^{(\pi)} \) and \( q^{(r)} \) be defined as in \eqref{eq:reward_posterior}. Then the mixture reward satisfies the following exact decomposition:
\begin{align}
r(x, y)
&= \sum_{k=1}^K q^{(\pi)}_k(x, y) \left[ r_k(x, y) + \log w_k(x) - \log q^{(\pi)}_k(x, y) \right] \notag \\
&\quad + D_{\mathrm{KL}}\left(q^{(\pi)}(\cdot \mid x, y) \, \| \, q^{(r)}(\cdot \mid x, y)\right). \label{eq:reward_decomposition}
\end{align}
\end{theorem}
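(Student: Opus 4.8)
The plan is to recognize this as an exact instance of the same ``evidence $=$ ELBO $+$ KL'' identity that underlies Theorem~\ref{theorem:ELBO for the MBT Model}, but kept as an equality by leaving the variational distribution free rather than optimizing it away. The starting observation is that, by the definition of the mixture reward in~\eqref{eq:mix_exp_reward}, we have $\exp(r(x,y)) = \sum_{k} w_k(x)\exp(r_k(x,y))$, so the reward-induced posterior $q^{(r)}$ in~\eqref{eq:reward_posterior} is exactly the self-normalized distribution associated with the nonnegative weights $a_k := w_k(x)\exp(r_k(x,y))$; that is, $q^{(r)}_k = a_k/\sum_j a_j$ and $r(x,y) = \log\sum_k a_k$.

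With this in hand, the core step is the elementary log-normalizer identity: for any probability vector $(q_k)$ over $\{1,\dots,K\}$ and any positive weights $(a_k)$,
\begin{equation*}
\log\sum_{k=1}^K a_k = \sum_{k=1}^K q_k \log\frac{a_k}{q_k} + D_{\mathrm{KL}}\!\left(q \,\middle\|\, q^*\right), \qquad q^*_k = \frac{a_k}{\sum_j a_j}.
\end{equation*}
This holds as an \emph{exact} equality for every $q$, and I would verify it by expanding the two right-hand terms: the entropy pieces $\sum_k q_k\log q_k$ cancel, the cross term $-\sum_k q_k\log q^*_k = -\sum_k q_k\log a_k + \log\sum_j a_j$ (using $\sum_k q_k=1$) cancels the $\sum_k q_k\log a_k$ coming from the first term, and what survives is precisely $\log\sum_j a_j = r(x,y)$.

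Specializing to $a_k = w_k(x)\exp(r_k(x,y))$ and $q = q^{(\pi)}(\cdot\mid x,y)$ — which is a valid probability vector by~\eqref{eq:reward_posterior} — gives $q^* = q^{(r)}$ and $\log a_k = r_k(x,y) + \log w_k(x)$. Substituting these into the identity yields exactly the claimed decomposition: the first right-hand sum becomes $\sum_k q^{(\pi)}_k\,[\,r_k(x,y)+\log w_k(x)-\log q^{(\pi)}_k\,]$, and the divergence term becomes $D_{\mathrm{KL}}(q^{(\pi)}\,\|\,q^{(r)})$.

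I do not anticipate a genuine obstacle here: the entire content is the algebraic cancellation above, and the only point requiring care is that the result is an equality rather than a lower bound — which is precisely why the free distribution $q^{(\pi)}$ (and not the optimal posterior $q^{(r)}$) is used, and why the residual KL term is retained explicitly rather than dropped. The one modeling detail worth double-checking is that $q^{(\pi)}$ genuinely normalizes to one, which follows immediately from $\pi(y\mid x)=\sum_k w_k(x)\pi_k(y\mid x)$ in~\eqref{eq:mixpolicy}.
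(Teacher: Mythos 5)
Your proposal is correct and follows essentially the same route as the paper: the elementary log-normalizer identity you state and verify by cancellation is precisely the paper's Lemma~\ref{lemma:variationalIdentity}, and the paper's proof of Theorem~\ref{theorem:Equality Decomposition of Reward Mixture} likewise applies it with \( a_k = w_k(x)\exp(r_k(x,y)) \) and \( q = q^{(\pi)} \), identifying the normalized weights with \( q^{(r)} \). Your closing remarks about exactness (the free, non-optimal \( q^{(\pi)} \) forcing retention of the KL term) and the normalization of \( q^{(\pi)} \) match the paper's discussion as well.
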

The decomposition in Theorem~\ref{theorem:Equality Decomposition of Reward Mixture} reveals that the reward in \eqref{eq:mix_exp_reward} can be expressed as an expectation under the policy-induced expert posterior \( q^{(\pi)} \), corrected by a KL divergence term that quantifies the mismatch between the expert responsibilities inferred by the policy and those implied by the reward scores. Unlike in standard variational inference settings—where the variational posterior is freely optimized—\( q^{(\pi)} \) is fully determined by the policy parameters \( \{w_k, \pi_k\} \) and cannot be adjusted to match \( q^{(r)} \). Consequently, the KL term does not vanish and must be retained as part of the objective. It plays a critical role in regularizing modular learning by penalizing structural misalignment between policy inference and reward modeling, thus enabling a principled treatment of expert specialization.

Next, we utilize the decomposition within Theorem~\ref{theorem:Equality Decomposition of Reward Mixture} to provide the decomposition for the policy training objective in~\eqref{eq:DPO_Training_Objective}. Namely,
\begin{lemma}[MoE-DPO Objective Decomposition]\label{lemma:MoE-DPO Objective Decomposition}
The training objective in~\eqref{eq:DPO_Training_Objective} can be written as
\begin{align}
\mathcal{L}_{\text{MoE-DPO}} &= \sum_{k=1}^K \mathbb{E}_{x \sim p,\, y \sim \pi_k(\cdot \mid x)} \left[ w_k(x)  \left( \widetilde{r}_k(x, y)   - \beta \log \frac{\pi_k(y \mid x)}{\pi_{\text{ref}(k)}(y \mid x)} \right) \right]. \label{eq:L_MoEDPO}
\end{align}
where \( \widetilde{r}_k(x, y) = r_k(x, y) - \log \left( \frac{q_k^{(r)}(x, y)}{w_k(x)} \right) \).
\end{lemma}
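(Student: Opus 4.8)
The plan is to start from the objective in~\eqref{eq:DPO_Training_Objective}, split it into the reward expectation and the per-expert KL penalty, and rewrite each piece as a single sum of expectations taken under the individual expert policies $\pi_k$. The engine for this rewriting is the change-of-measure identity $\pi(y\mid x)\, q^{(\pi)}_k(x,y) = w_k(x)\, \pi_k(y\mid x)$, which is immediate from the definition of $q^{(\pi)}$ in~\eqref{eq:reward_posterior} and converts any $\sum_k q^{(\pi)}_k$-weighted expectation under the mixture $\pi$ into a $w_k(x)$-weighted expectation under $\pi_k$.

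For the reward term, I would substitute the exact decomposition of $r(x,y)$ from Theorem~\ref{theorem:Equality Decomposition of Reward Mixture} into $\mathbb{E}_{x,\,y\sim\pi}[r(x,y)]$. Writing the KL term as $D_{\mathrm{KL}}(q^{(\pi)}\|q^{(r)}) = \sum_k q^{(\pi)}_k \log\!\big(q^{(\pi)}_k/q^{(r)}_k\big)$ and absorbing it into the bracketed sum, the two $\log q^{(\pi)}_k$ contributions cancel, leaving a single $\sum_k q^{(\pi)}_k\big[\,r_k + \log w_k - \log q^{(r)}_k\,\big]$ term. The key algebraic simplification is that this bracket is exactly $\widetilde r_k$: since $q^{(r)}_k(x,y)/w_k(x) = \exp\!\big(r_k(x,y) - r(x,y)\big)$ by the definition of $q^{(r)}$ together with~\eqref{eq:mix_exp_reward}, one has $r_k + \log w_k - \log q^{(r)}_k = r_k - \log\!\big(q^{(r)}_k/w_k\big) = \widetilde r_k(x,y)$ (and in fact $\widetilde r_k \equiv r$, a useful sanity check). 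Applying the change-of-measure identity then turns $\mathbb{E}_{x,\,y\sim\pi}\big[\sum_k q^{(\pi)}_k \widetilde r_k\big]$ into $\sum_k \mathbb{E}_{x,\,y\sim\pi_k}\big[\,w_k(x)\,\widetilde r_k(x,y)\,\big]$.

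For the regularization term I would simply expand each divergence as $D_{\mathrm{KL}}(\pi_k\|\pi_{\mathrm{ref}(k)}) = \mathbb{E}_{y\sim\pi_k}\big[\log(\pi_k/\pi_{\mathrm{ref}(k)})\big]$ and pull the $y$-independent factor $w_k(x)$ inside the expectation, so the penalty already has the form $-\beta\sum_k \mathbb{E}_{x,\,y\sim\pi_k}\big[\,w_k(x)\log(\pi_k/\pi_{\mathrm{ref}(k)})\,\big]$. Adding the two contributions and collecting the common factor $w_k(x)$ and the shared expectation $\mathbb{E}_{x,\,y\sim\pi_k}$ yields exactly~\eqref{eq:L_MoEDPO}.

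I expect the main obstacle to be organizational rather than conceptual: keeping the bookkeeping straight when combining the Theorem~\ref{theorem:Equality Decomposition of Reward Mixture} KL term with the posterior-weighted sum so that the $\log q^{(\pi)}_k$ terms visibly cancel, and verifying that the residual bracket collapses to $\widetilde r_k$. Once the change-of-measure identity is in hand, no integral needs to be evaluated explicitly; everything reduces to substitution, a single cancellation, and relabeling the expectation.
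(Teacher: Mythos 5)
Your proof is correct and takes essentially the same route as the paper's: substitute the exact decomposition from Theorem~\ref{theorem:Equality Decomposition of Reward Mixture} into $\mathbb{E}_{x,\,y\sim\pi}[r(x,y)]$, cancel the $\log q^{(\pi)}_k$ terms against the expanded KL divergence to leave the bracket $r_k + \log w_k - \log q^{(r)}_k = \widetilde{r}_k$, and apply the change-of-measure identity $q^{(\pi)}_k(x,y)\,\pi(y\mid x) = w_k(x)\,\pi_k(y\mid x)$ to both the reward term and the per-expert KL penalty. Your extra observation that $\widetilde{r}_k(x,y) \equiv r(x,y)$ (since $q^{(r)}_k/w_k = \exp(r_k - r)$) is a valid sanity check that the paper does not state, but it does not alter the argument.
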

The extra term in \( \widetilde{r}_k(x, y) \) adjusts the reward to account for discrepancies between the gating distribution \( w_k(x) \) and the posterior responsibilities \( q_k^{(r)}(x, y) \) induced by the reward model. This term acts as a localized KL penalty that encourages alignment between policy-induced expert selection and reward-based expert attribution, promoting consistency between modular inference and modular supervision. This formulation admits closed-form updates for each expert policy \( \pi_k \), facilitating scalable and modular optimization. Specifically, the per-expert KL-regularized objective becomes:
\[
\mathcal{L}_k(\pi_k) = \mathbb{E}_{x \sim p} \left[ w_k(x) \, \mathbb{E}_{y \sim \pi_k(y \mid x)} \left[\widetilde{r}_k(x,y) - \beta \log \frac{\pi_k(y \mid x)}{\pi_{\text{ref}(k)}(y \mid x)}\right] \right],
\]
and the optimal solution takes the form detailed in the next theorem.
\begin{theorem}[Policy-Reward Equivalence under Mixture Models]\label{lemma:Optimal Expert Policy under KL-Regularized Objective}
The expert policy that maximizes \( \mathcal{L}_k(\pi_k) \) is given by:
\[
\pi_k^*(y \mid x) = \frac{1}{Z'_k(x)} \pi_{\text{ref}(k)}(y \mid x) \exp\left( \frac{1}{\beta} r_k(x, y) \right),
\]
where the normalizing constant is defined as \( Z_k(x) = \sum_y \pi_{\text{ref}(k)}(y \mid x) \exp\left( \frac{1}{\beta} \widetilde{r}_k(x, y) \right) \).
\end{theorem}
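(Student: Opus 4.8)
The plan is to recognize $\mathcal{L}_k(\pi_k)$ as a KL-regularized reward-maximization problem and solve it by the standard log-partition (``completing the KL'') argument, pointwise in $x$. Because the outer expectation $\mathbb{E}_{x\sim p}$ and the nonnegative gating weight $w_k(x)$ multiply an inner functional that depends on $\pi_k(\cdot\mid x)$ only through the conditional distribution at that $x$, the optimization decouples across prompts: for each fixed $x$ with $w_k(x)>0$ it suffices to maximize, over distributions $\pi_k(\cdot\mid x)$ on the response space, $\mathbb{E}_{y\sim\pi_k}[\widetilde r_k(x,y)]-\beta\,D_{\mathrm{KL}}\big(\pi_k(\cdot\mid x)\,\|\,\pi_{\mathrm{ref}(k)}(\cdot\mid x)\big)$, where I have rewritten $\mathbb{E}_{y\sim\pi_k}[-\beta\log(\pi_k/\pi_{\mathrm{ref}(k)})]$ as $-\beta\,D_{\mathrm{KL}}$.

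First I would introduce the candidate Gibbs policy $\pi_k^*(y\mid x)=\frac{1}{Z_k(x)}\pi_{\mathrm{ref}(k)}(y\mid x)\exp\!\big(\tfrac1\beta\widetilde r_k(x,y)\big)$ with $Z_k(x)=\sum_y\pi_{\mathrm{ref}(k)}(y\mid x)\exp(\tfrac1\beta\widetilde r_k(x,y))$, exactly the normalizer in the statement. Absorbing $\widetilde r_k$ into the logarithm, the inner functional becomes $\beta\log Z_k(x)-\beta\,D_{\mathrm{KL}}(\pi_k(\cdot\mid x)\,\|\,\pi_k^*(\cdot\mid x))$. Since $\log Z_k(x)$ does not depend on $\pi_k$ and the Kullback--Leibler divergence is nonnegative, vanishing exactly when $\pi_k(\cdot\mid x)=\pi_k^*(\cdot\mid x)$, the unique maximizer is $\pi_k^*$. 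A Lagrange-multiplier computation enforcing $\sum_y\pi_k=1$ recovers the same stationary point, but the KL-completion route simultaneously certifies global optimality and uniqueness, so I would prefer it.

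Finally I would make the exponent explicit by simplifying the adjusted reward. Substituting $q_k^{(r)}$ from \eqref{eq:reward_posterior} gives $\log\big(q_k^{(r)}(x,y)/w_k(x)\big)=r_k(x,y)-\log\sum_j w_j(x)\exp(r_j(x,y))=r_k(x,y)-r(x,y)$ by the definition of the mixture reward \eqref{eq:mix_exp_reward}, whence $\widetilde r_k(x,y)=r_k(x,y)-(r_k(x,y)-r(x,y))=r(x,y)$. Thus the adjustment collapses the per-expert objectives onto the common mixture reward, and $\pi_k^*(y\mid x)\propto\pi_{\mathrm{ref}(k)}(y\mid x)\exp(\tfrac1\beta\widetilde r_k(x,y))$ is the claimed closed form with the stated normalizing constant $Z_k(x)$.

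The hard part, and the step I would watch most carefully, is the bookkeeping linking $r_k$, the adjusted reward $\widetilde r_k$, and the mixture reward $r$: it is $\widetilde r_k$---not $r_k$ alone---that drives both the Gibbs exponent and the normalizer, and getting the closed form consistent with the stated $Z_k(x)$ hinges on the identity $\widetilde r_k=r$. A secondary technical point is well-posedness: I would state mild integrability or boundedness conditions (e.g.\ $\widetilde r_k$ bounded above, or light tails on $\pi_{\mathrm{ref}(k)}$) ensuring $Z_k(x)<\infty$, so that $\pi_k^*$ is a genuine probability distribution and the KL-completion manipulation is valid.
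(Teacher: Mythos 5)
Your proof is correct and follows the same overall structure as the paper's: both decouple the objective pointwise in \(x\) (using that \(w_k(x)\ge 0\) multiplies an inner functional depending only on \(\pi_k(\cdot\mid x)\)) and both arrive at the Gibbs policy \(\pi_k^*(y\mid x)\propto \pi_{\mathrm{ref}(k)}(y\mid x)\exp\!\big(\tfrac1\beta\widetilde r_k(x,y)\big)\). The difference is in how optimality is certified: the paper introduces a Lagrange multiplier for the normalization constraint, solves the stationarity condition, and then appeals to strict concavity of the objective over the simplex for uniqueness, whereas your KL-completion argument rewrites the inner functional as \(\beta\log Z_k(x)-\beta\,D_{\mathrm{KL}}\big(\pi_k(\cdot\mid x)\,\|\,\pi_k^*(\cdot\mid x)\big)\) and gets global optimality and uniqueness in one stroke from nonnegativity of the KL divergence. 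Your route is cleaner in that it needs no separate concavity remark, and your attention to well-posedness (\(Z_k(x)<\infty\)) is a point the paper glosses over. You also add a genuinely new observation that the paper never states: by the definitions of \(q_k^{(r)}\) in \eqref{eq:reward_posterior} and of the mixture reward in \eqref{eq:mix_exp_reward}, one has \(\log\big(q_k^{(r)}(x,y)/w_k(x)\big)=r_k(x,y)-r(x,y)\), hence \(\widetilde r_k(x,y)=r(x,y)\) identically — every expert tilts its own reference policy by the common mixture reward. This identity is correct and is a useful structural insight, though it is not needed for the optimization argument itself.

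One bookkeeping point you flagged deserves to be made explicit: both your derivation and the paper's own proof place \(\widetilde r_k\) (equivalently, by your identity, \(r\)) in the exponent, whereas the theorem statement as printed writes \(\exp\!\big(\tfrac1\beta r_k(x,y)\big)\) while defining the normalizer \(Z_k(x)\) with \(\widetilde r_k\) (and writes \(Z_k'\) in the display but \(Z_k\) in the definition). Since \(\widetilde r_k = r \neq r_k\) in general, the statement's exponent is a typo — the internally consistent form, confirmed by the paper's proof and by the inversion formula \eqref{eq:reward_policy}, is the one you derived. Your instinct that this was the step to watch was exactly right.
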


This closed-form update mirrors the structure of the standard DPO solution while incorporating expert-specific responsibilities and reward corrections. It supports independent optimization of each expert, making the overall training procedure efficient and amenable to parallelization.

Since the latent variable \( z \) is shared across the preference-alignment and policy-optimization models, we can leverage the closed-form correspondence between the expert-specific reward \( r_k(x, y) \) and the optimal policy \( \pi_k(y \mid x) \), as established in Theorem~\ref{lemma:Optimal Expert Policy under KL-Regularized Objective}. Substituting the reward expression,
\begin{equation}\label{eq:reward_policy}
r_k(x,y) = \beta \log\left( \frac{\pi_k(y \mid x) Z_k(x)}{\pi_{\text{ref}(k)}(y \mid x)} \right) + \log\left( \frac{q^{(r)}_k(x,y)}{w_k(x)} \right),
\end{equation}
into the MBT variational bound in~\eqref{eq:mbt_loss} yields a reward-modulated loss that is decomposable across expert heads. Specifically, we express the total MBT loss as \( \mathcal{L}_{\mathrm{MBT}} = \sum_{k=1}^K \mathcal{L}_k^{\mathrm{MBT}}(\pi_k) \),
where each expert-specific objective \( \mathcal{L}_k^{\mathrm{MBT}} \) is given by
\begin{align}
\mathcal{L}_k^{\mathrm{MBT}}(\pi_k) =
- \mathbb{E}_{(x, y^+, y^-) \sim \mathcal{D}}
\left[
q_k(x, y^+, y^-)
\log \frac{
\left( \frac{\pi_k(y^+ \mid x)}{\pi_{\mathrm{ref}(k)}(y^+ \mid x)} \right)^\beta  q^{(r)}_k(x, y^+)
}{
\sum_{\eta \in \{y^+, y^-\}}
\left( \frac{\pi_k(\eta \mid x)}{\pi_{\mathrm{ref}(k)}(\eta \mid x)} \right)^\beta q^{(r)}_k(x, \eta)
}
\right],\label{eq:DPO_k_policy_loss}
\end{align}
where \( q_k(x, y^+, y^-) \) is defined in Theorem~\ref{theorem:ELBO for the MBT Model}.
This decomposition isolates the contribution of each expert head \( \pi_k \), allowing for fully decoupled and independent optimization of \( \pi_k \) parameters. However, note that the original DPO model and training objective from~\cite{rafailov2023direct} are recovered when \( K = 1 \).

Finally, to derive the optimization objective for the prior weights \( w_k(x) \), we first note that, from Theorem~\ref{theorem:ELBO for the MBT Model}, the only dependence of the ELBO on \( w_k(x) \) appears in the KL divergence term between the variational posterior \( q_k(x, y^+, y^-) \) and the prior \( w_k(x) \). Consequently, we show in the Appendix how maximizing the ELBO with respect to \( w_k(x) \) is equivalent to the minimization
\begin{equation}\label{eq:wk_loss}
\argmin_{w_k(x)}\mathbb{E}_{(x, y^+, y^-) \sim \mathcal{D}} \left[ \sum_{k=1}^K q_k(x, y^+, y^-) \log \frac{q_k(x, y^+, y^-)}{w_k(x)} \right].
\end{equation}
This objective encourages weights to match the empirical expert responsibilities inferred from preference data, ensuring that the prior over experts reflects their posterior under the MBT model. Next, we consider two variants of our algorithm depending on the structure of the mixture weights.\\
\textbf{Mix-DPO.} The optimal weights \( w_k(x) \) are fixed across inputs, and their update from~\eqref{eq:wk_loss} has a closed form given by averaging the responsibilities:
\( w_k \gets \frac{1}{n} \sum_{i=1}^n q_k(x_i, y_i^+, y_i^-) \), and then renormalizing.\\
\textbf{MoE-DPO.} The weights are input-dependent and parameterized by a gating network \( w_k(x; \phi) \) via a softmax over logits. The parameters \( \phi \) are updated by minimizing the objective in~\eqref{eq:wk_loss}, which reduces to the cross-entropy between predicted weights and inferred posteriors:
\( \mathcal{L}_{\text{gating}}(\phi) = - \mathbb{E}_{(x, y^+, y^-) \sim \mathcal{D}} \left[ \sum_k q_k(x, y^+, y^-) \log w_k(x; \phi) \right] \).
The gating function can also be conditioned on user metadata \( u \in \mathcal{U} \), enabling personalized mixture weights \( w_k(x, u; \phi) \) without modifying the expert policies.

\section{Algorithm}
\label{sec:algorithm}
Building on the variational framework and expert-specific loss decomposition in Section~\ref{sec:model}, we now present the training procedure for optimizing modular preference-aligned policies. As shown in Corollary~\ref{cor:mbt_loss} and Theorem~\ref{lemma:Optimal Expert Policy under KL-Regularized Objective}, the MoE-DPO objective admits a per-expert decomposition, where each expert policy \( \pi_k \) is trained by minimizing its own KL-regularized preference loss \( \mathcal{L}_k^{\mathrm{MBT}} \). To ensure consistency between the policy and reward components, expert rewards are updated after each policy step via the closed-form transformation in~\eqref{eq:reward_policy} derived from the optimality condition.

Algorithm~\ref{alg:EM_MBT_short} summarizes the resulting variational EM procedure. Each iteration alternates between:
(i) an E-step that computes expert responsibilities \( q_k(x, y^+, y^-) \) using the MBT posterior in Theorem~\ref{theorem:ELBO for the MBT Model};
(ii) an M-step that updates each expert policy \( \pi_k \) by minimizing the loss \( \mathcal{L}_k^{\mathrm{MBT}} \) in~\eqref{eq:DPO_k_policy_loss};
(iii) a reward update via the closed-form expression in~\eqref{eq:reward_policy}; and
(iv) an update of the mixture weights \( w_k(x) \) based on inferred responsibilities and the objective function in~\eqref{eq:wk_loss}.
This modular structure allows flexible optimization regimes: depending on the setting, the expert policies, the gating network, or both can be updated jointly or independently within each iteration.

\begin{algorithm}[]
\caption{Variational EM Algorithm for Mixture Preference Alignment}
\label{alg:EM_MBT_short}
\begin{algorithmic}[1]
\Require Expert policies and rewards \(\{\pi_k, r_k\}_{k=1}^K\), mixture weights \( w_k(x) \), reference policies \( \pi_{\text{ref}(k)} \), temperature \( \beta \).
\While{not converged}
    \State \textbf{Sample minibatch:} \( \{(x_i, y_i^+, y_i^-)\}_{i=1}^n \sim \mathcal{D} \).
    \If{Trainable expert policies}
        \State \textbf{E-step:} Compute posteriors \( q_k(x_i, y_i^+, y_i^-) \) from rewards \( r_k \) and weights \( w_k(x) \).
        \State \textbf{M-step:} Update each \( \pi_k \) using \( q_k \), \( \pi_{\text{ref}(k)} \), and temperature \( \beta \).
    \EndIf
    \State \textbf{Reward update:} Update each \( r_k(x, y) \) using Eq.~\eqref{eq:reward_policy}.
    \If{Trainable mixture weights}
        \State \textbf{E-step:} Recompute responsibilities \( q_k(x_i, y_i^+, y_i^-) \).
        \State \textbf{M-step:} Update gating network parameters \( \phi \) (MoE-DPO) or fixed weights (Mix-DPO) \( w_k \).
    \EndIf
\EndWhile
\end{algorithmic}
\end{algorithm}

This algorithm implements the full variational EM cycle over preference-labeled data. Expert policies and rewards are jointly updated in a way that maintains the policy–reward correspondence (Theorem~\ref{lemma:Optimal Expert Policy under KL-Regularized Objective}), while the mixture weights adapt to match inferred responsibilities. The structure supports parallelization across experts and modular personalization via user-conditioned gating.

\section{Experimental Evaluation}\label{sec:experiments}
This section details the experimental evaluation of our proposed Mix- and MoE-DPO methods against baseline DPO. We first present two primary experiments on the alignment of GPT-2 \cite{radford2019language} for the review generation task \cite{rafailov2023direct}, each exploring variations of the task: (1) a multi-reward task, where the model is trained to generate positive, informative, and grammatically correct movie reviews, and (2) a contextual preference-alignment task, where the model is trained to generate either positive movie or book reviews based on the given prompt. We provide ablation studies to assess the impact of fixed mixture weights or expert policies, which we include as an optional simplification within Algorithm~\ref{alg:EM_MBT_short}.

\subsection{Mix-DPO for Multi-Reward Movie Review Generation}
\textbf{Models:} We apply Mix-DPO for preference alignment to the three different reward functions of sentiment, informativeness, and grammar using a supervised-fine-tuned GPT-2 \cite{radford2019language} on the IMDb \cite{maas-etal-2011-learning} dataset as the base and reference policy. We evaluate two configurations: Case 1 uses a single GPT-2 with three heads (final linear layers) for parameter-efficient sharing, and Case 2 employs three copies of the fine-tuned GPT-2 model. \textbf{Datasets:} We construct a pairwise preference dataset from IMDb similar to \cite{rafailov2023direct}, where we first sample completions for the 25{,}000 prompts in the training dataset. Preferences over these responses are annotated with respect to sentiment, informativeness, and grammar, with further details on our reward-function construction included in the Appendix. \textbf{Algorithms:} Mix-DPO minimizes a weighted loss across mixture components with learnable weights initialized at \(1/3\) for both cases. For Case 1 (parameter-efficient sharing), training proceeds for 3 epochs, while for Case 2 (independent models), training proceeds for 1 epoch, both with a learning rate of \(10^{-5}\), batch size of 64, and the AdamW optimizer. \textbf{Metrics:} We generate completions from 10{,}000 prompts in the test set and compute reward scores corresponding to each ground-truth reward function. Completions can be generated via individual mixture components or the entire mixture, with both sets of metrics reported in Table~\ref{tab:exp1_rewards}, in addition to baseline DPO.

\textbf{Results:} During training of Mix-DPO, we log the posterior weights \( q_k(x_i, y_i^+, y_i^-) \) corresponding to each head and annotation style. In Figure~\ref{fig:training} (left panel), we see a clear separation of \(q\)-weights in each head based on annotation style, indicating specialization of heads for the distinct reward functions of sentiment, informativeness, and grammar. Specifically, we observe that, averaged over data from the corresponding annotation style, the posterior weights diverge over training, with Head 0 decreasing to approximately 0.31, Head 1 stabilizing around 0.34, and Head 2 increasing to 0.35.

\begin{figure}[!htp]
  \centering
  \begin{minipage}{0.47\textwidth}
    \centering
    \includegraphics[width=\textwidth,height=4cm,keepaspectratio]{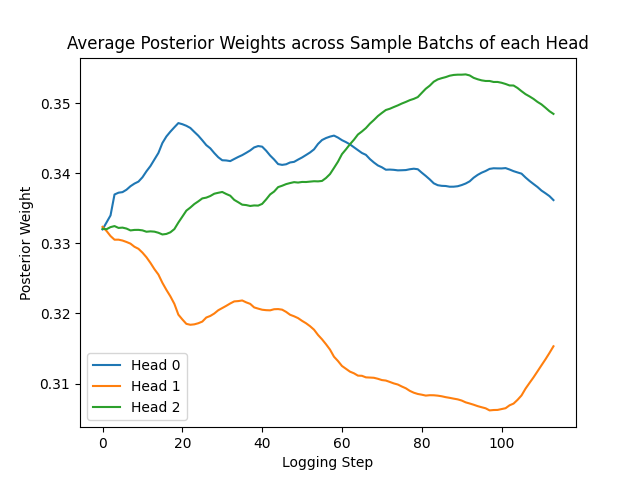}
  \end{minipage}\hspace{0.005\textwidth}
  \begin{minipage}{0.47\textwidth}
    \centering
    \includegraphics[width=\textwidth,height=4cm,keepaspectratio]{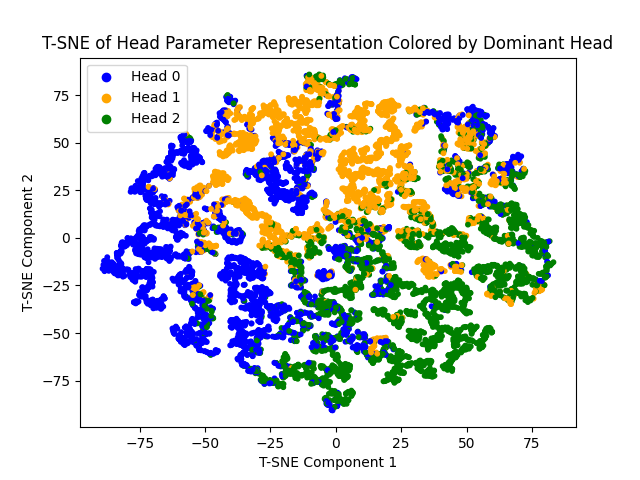}
  \end{minipage}\hspace{0.005\textwidth}
  \caption{\textbf{Left panel:} Average posterior weights for Mix-DPO heads—(a) \textcolor{blue}{Head 0}, (b) \textcolor{orange}{Head 1}, and (c) \textcolor{Green}{Head 2}—indicates specialization. \textbf{Right panel:} t-SNE plot of head-parameter representations indicates head separation.}\label{fig:training}
\end{figure}

On the test set, Table~\ref{tab:exp1_rewards} shows that Case 1 (Mixture effectiveness in aligning LLMs) outperforms baseline DPO across sentiment (0.654 ± 0.004 vs.\ 0.610 ± 0.004) and grammar (0.241 ± 0.001 vs.\ 0.216 ± 0.001), but underperforms on informativeness (0.326 ± 0.007 vs.\ 0.363 ± 0.008). Ablation studies in Table~\ref{tab:exp1_rewards} show that fixed weights improve over DPO but do not perform as well as Mix-DPO when averaging heads.

\begin{table}[!htp]
\centering
\small
\caption{Reward scores (Mean ± SE) for Mix-DPO on the IMDb test set.}
\label{tab:exp1_rewards}
\begin{tabular}{lccc}
\toprule
Model & \textcolor{blue}{Sentiment} & \textcolor{orange}{Informativeness} & \textcolor{Green}{Grammar} \\[0.1cm]
\midrule
Baseline DPO & 0.610 ± 0.004 & 0.363 ± 0.008 & 0.216 ± 0.001 \\[0.1cm]
\toprule
Case 1 (Mixture) & 0.654 ± 0.004 & 0.326 ± 0.007 & 0.241 ± 0.001 \\
Case 1 (Sparse)  & & & \\
\quad Head 0 & 0.616 ± 0.020 & \textcolor{orange}{\textbf{0.396}} ± 0.007 & 0.263 ± 0.001 \\[0.1cm]
\quad Head 1 & \textcolor{blue}{\textbf{0.720}} ± 0.003 & 0.394 ± 0.008 & 0.213 ± 0.001 \\[0.1cm]
\quad Head 2 & 0.632 ± 0.004 & 0.342 ± 0.007 & \textcolor{Green}{\textbf{0.267}} ± 0.001 \\[0.1cm]
\toprule
Case 2 (Mixture) & \textbf{0.664} ± 0.004 & \textbf{0.350} ± 0.006 & \textbf{0.465} ± 0.002 \\
\toprule
Fixed Weights (\(w_i = 1/3\)) & 0.646 ± 0.004 & 0.318 ± 0.009 & 0.239 ± 0.002 \\
\bottomrule
\end{tabular}
\end{table}

A word cloud of sampled terms from the heads (Figure~\ref{fig:word_cloud}) highlights their specialization: Head 0—strong in informativeness (0.396 ± 0.007) and grammar (0.263 ± 0.001), but weaker in positive sentiment (0.616 ± 0.002)—produces terms like “\textcolor{blue}{Lebowski},” “\textcolor{blue}{Solomon},” and “\textcolor{blue}{average}”; Head 1—excelling in positive sentiment (0.720 ± 0.003) and informativeness (0.394 ± 0.008)—generates terms like “\textcolor{orange}{funny},” “\textcolor{orange}{good},” and “\textcolor{orange}{Cusack},” reflecting a focus on emotional tone; and Head 2—with the highest grammar score (0.267 ± 0.001)—focuses on syntactic structure with terms like “\textcolor{Green}{movie},” “\textcolor{Green}{interesting},” and “\textcolor{Green}{story}.”

\begin{figure}[!htp]
  \centering
  \begin{minipage}{0.31\textwidth}
    \centering
    \includegraphics[width=\textwidth,height=8cm,keepaspectratio]{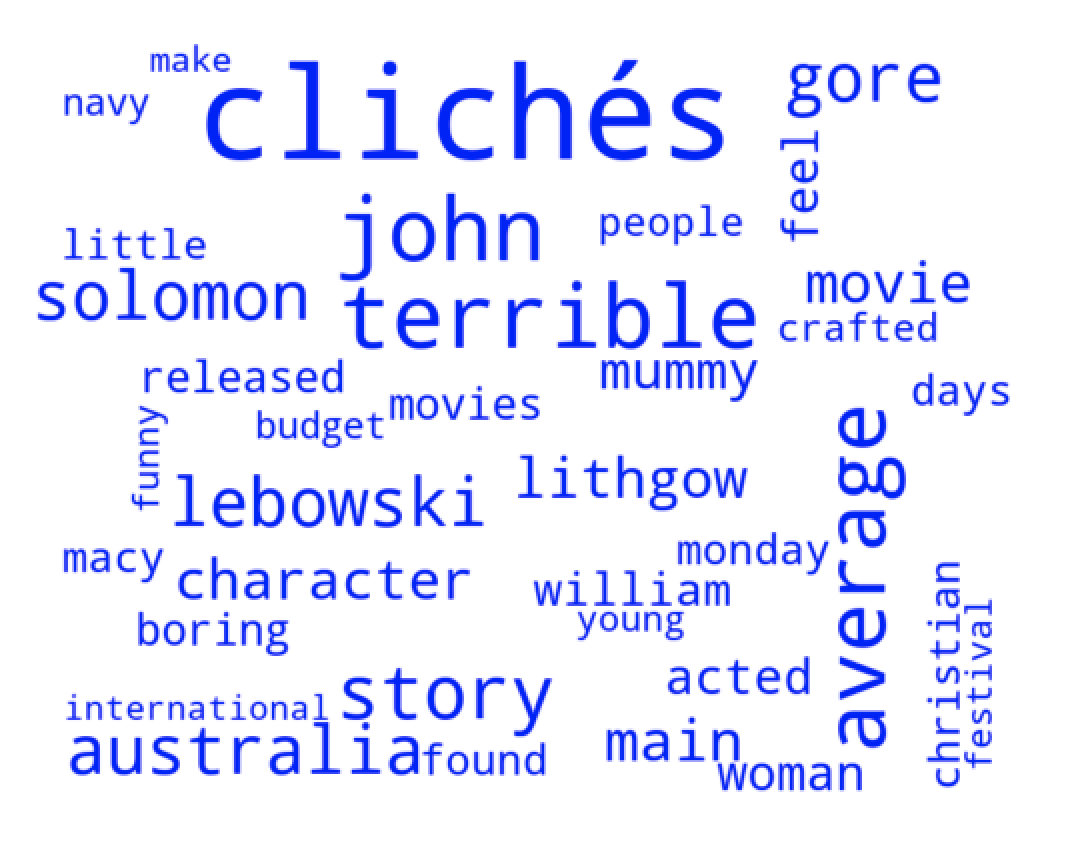}
  \end{minipage}\hspace{0.015\textwidth}
  \begin{minipage}{0.31\textwidth}
    \centering
    \includegraphics[width=\textwidth,height=8cm,keepaspectratio]{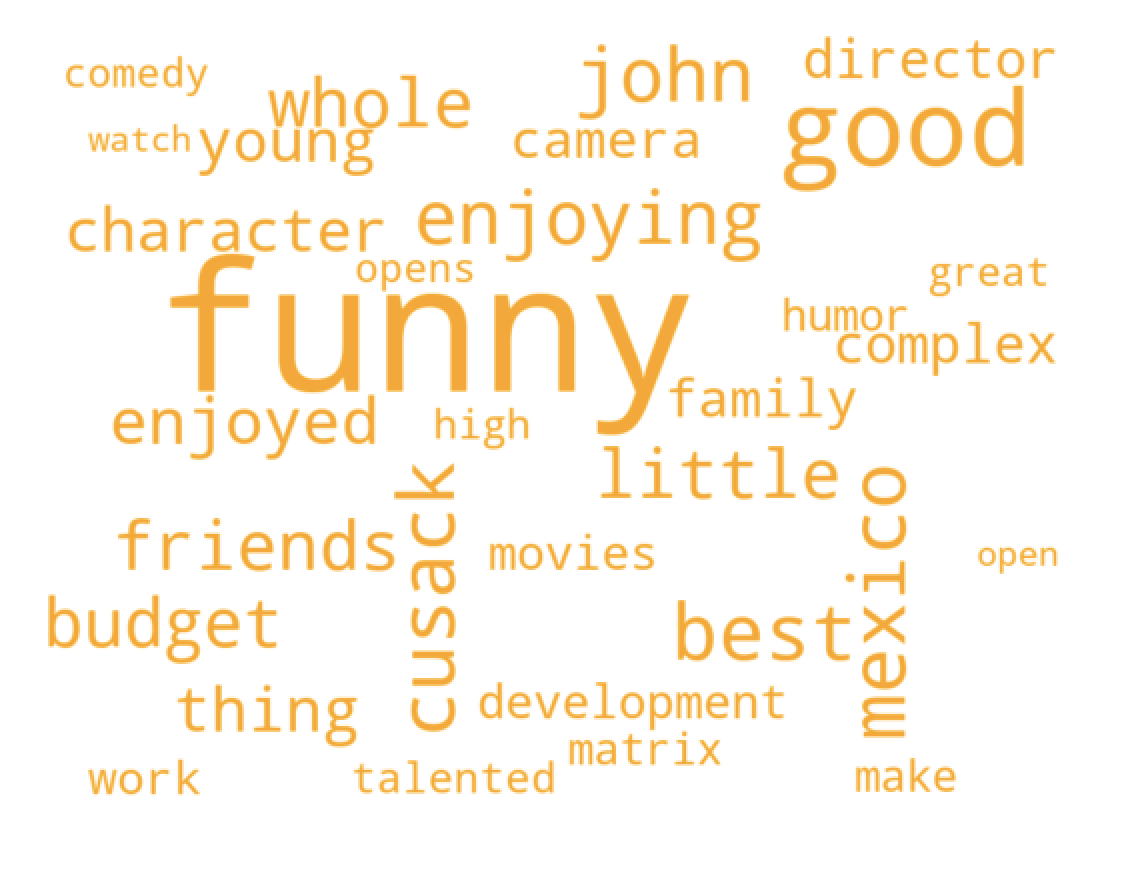}
  \end{minipage}\hspace{0.015\textwidth}
  \begin{minipage}{0.31\textwidth}
    \centering
    \includegraphics[width=\textwidth,height=8cm,keepaspectratio]{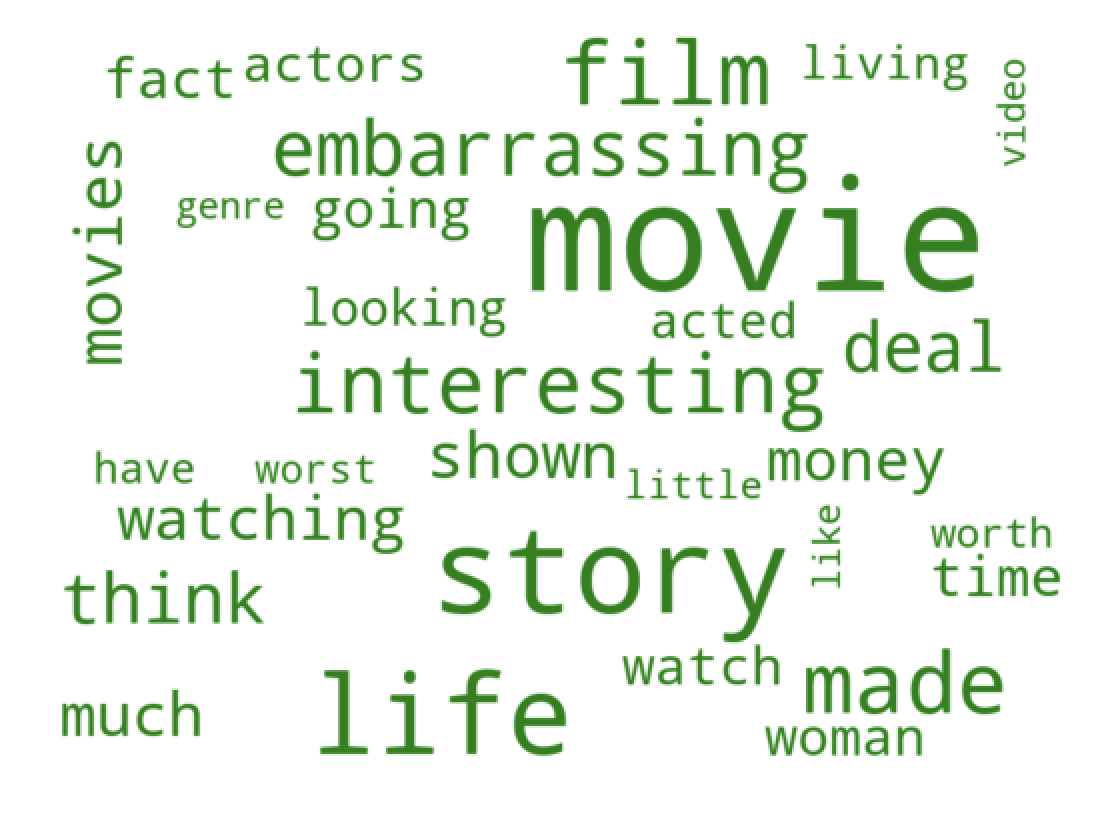}
  \end{minipage}
  \caption{Sampled words from responses generated by Mix-DPO heads in Case 1: (a) \textcolor{blue}{Head 0} (Informativeness and Grammar), (b) \textcolor{orange}{Head 1} (Positive Sentiment and Informativeness), and (c) \textcolor{Green}{Head 2} (Grammar).}
  \label{fig:word_cloud}
\end{figure}

\subsection{MoE-DPO for Multi-Task Review Generation}
\textbf{Models:} We apply MoE-DPO for preference alignment to two different user groups (i.e., tasks) of book reviews and movie reviews. Using the same GPT-2 base model~\cite{radford2019language}, we evaluate our algorithm in the case of parameter sharing (Case 1) and independent models (Case 2). For MoE-DPO, we utilize prompt-dependent weights via a pretrained linear classifier—i.e., \( w(x)=Wx+b \)—which can be frozen or jointly learned during training. \textbf{Datasets:} We augment the IMDb dataset \cite{maas-etal-2011-learning} with 25{,}000 pairwise preferences from Amazon Book Reviews, a subset of the Amazon review dataset \cite{hou2024bridging}, forming a 50{,}000-pair dataset with prompts labeled for movie or book reviews. Further details on the book-review dataset can be found in the Appendix. We pretrain a linear classifier for 5{,}000 steps on the prompts to achieve approximately \(65\%\) accuracy for the true source label. \textbf{Algorithms:} We employ the mixture-of-experts loss for both cases, and in joint learning we alternate between the two losses \( \mathcal{L}_{\text{MoE-DPO}} \) and \( \mathcal{L}_{\text{gating}} \). Training proceeds for 3 epochs with a learning rate of \(10^{-5}\), batch size of 64, and the AdamW optimizer. \textbf{Metrics:} We generate completions for 10{,}000 test-set prompts, equally split between 5{,}000 book and 5{,}000 movie reviews. Reward scores are computed with the ground-truth sentiment reward function.

\textbf{Results:} The performance of MoE-DPO across different configurations is summarized in Table~\ref{tab:exp2_rewards}. In Table~\ref{tab:exp2_rewards}, Case 1 (Frozen Gating Layer) achieves a sentiment reward on the movie task of 0.638 ± 0.005, on par with the learnable gating layer at 0.639 ± 0.005. For the book task, a learnable gating layer outperforms, with 0.734 ± 0.004 compared to 0.709 ± 0.004 achieved with a frozen gating layer. Both Case 1 configurations outperform baseline DPO (0.603 ± 0.005 on movie sentiment and 0.648 ± 0.005 on book sentiment), indicating that improved multi-task alignment can be achieved via our mixture model.

\begin{table}[!htp]
\centering
\small
\caption{Sentiment reward scores (Mean ± SE) for MoE-DPO.}
\label{tab:exp2_rewards}
\begin{tabular}{lcc}
\toprule
Model & Movie Sentiment & Book Sentiment \\[0.1cm]
\midrule
Baseline DPO & 0.603 ± 0.005 & 0.648 ± 0.005 \\[0.1cm]
\midrule
Case 1 (Frozen Gating Layer) & \textbf{0.638} ± 0.005 & 0.709 ± 0.004 \\
Case 1 (Learnable Gating Layer) & \textbf{0.639} ± 0.005 & \textbf{0.734} ± 0.004 \\
\bottomrule
\end{tabular}
\end{table}

\begin{figure}[!htp]
  \centering
  \begin{minipage}{0.47\textwidth}
    \centering
    \includegraphics[width=\textwidth,height=4cm,keepaspectratio]{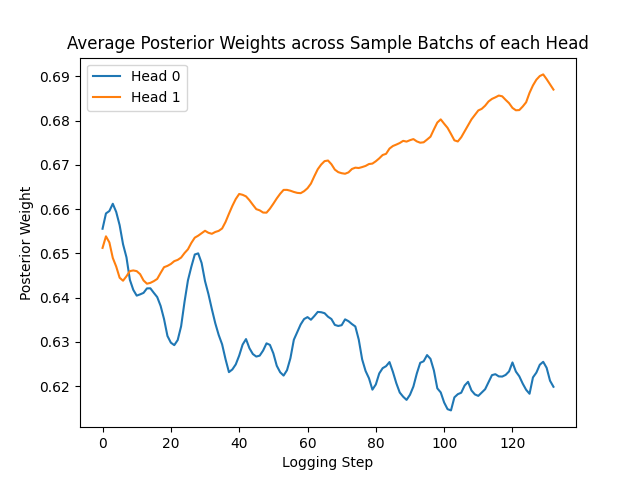}
  \end{minipage}\hspace{0.005\textwidth}
  \begin{minipage}{0.47\textwidth}
    \centering
    \includegraphics[width=\textwidth,height=4cm,keepaspectratio]{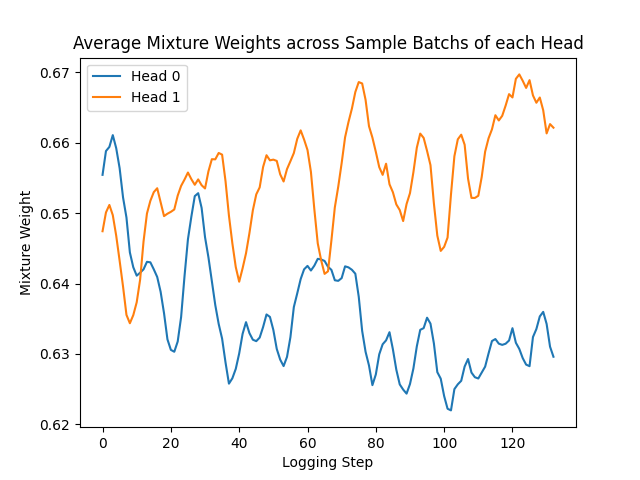}
  \end{minipage}\hspace{0.005\textwidth}
  \caption{\textbf{Left panel:} Average posterior weights for MoE-DPO heads in Case 1—(a) \textcolor{blue}{Head 0} and (b) \textcolor{orange}{Head 1}—indicates specialization. \textbf{Right panel:} Average mixture weights for MoE-DPO indicates prompt separation.}
  \label{fig:moe_weights}
\end{figure}

\begin{figure}[!htp]
  \centering
  \begin{minipage}{0.31\textwidth}
    \centering\includegraphics[width=1.1\textwidth,height=4cm]{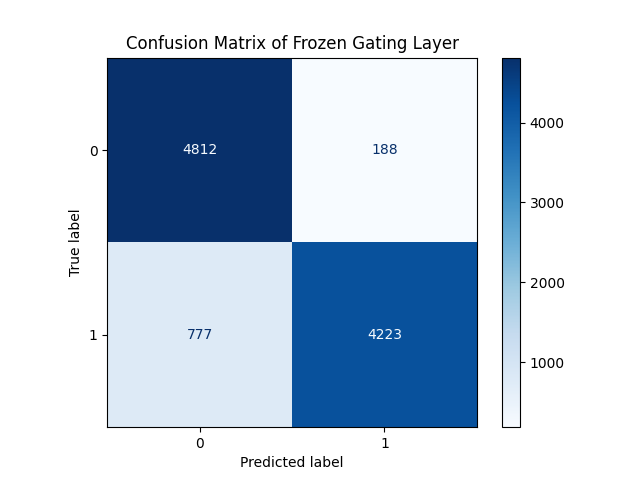}
  \end{minipage}
  \begin{minipage}{0.31\textwidth}
    \centering
    \includegraphics[width=1.1\textwidth,height=4cm]{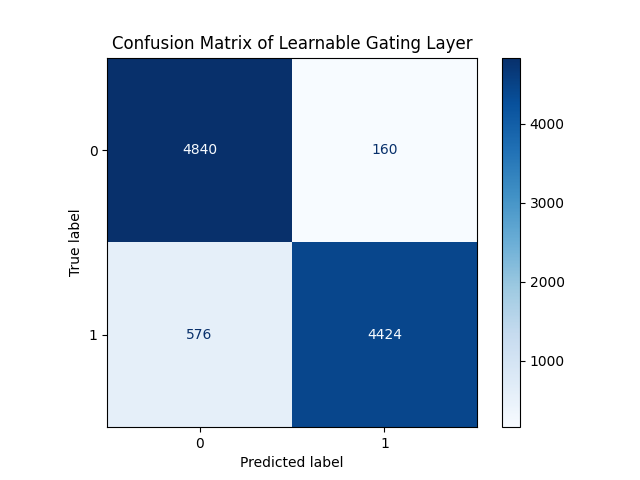}
  \end{minipage}
  \begin{minipage}{0.31\textwidth}
    \centering
    \includegraphics[width=1.1\textwidth,height=4cm]{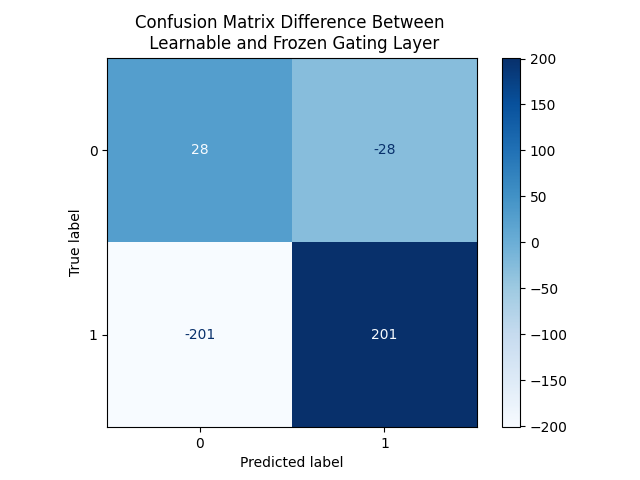}
  \end{minipage}
  \caption{Confusion matrices of frozen and learnable gating layers for movie (0) vs.\ book (1) prompts, with the difference indicating improvements in predicted labels under joint learning during training.}
  \label{fig:moe_confusion}
\end{figure}

\section{Conclusion}
Mix- and MoE-DPO provides a unified approach to modular preference alignment. It generalizes DPO with latent expert mixtures, supports efficient variational training, and enables expert reuse, contextual adaptation, and user-specific specialization—advancing the practical and theoretical foundation for aligning LLMs. Beyond the core formulation presented in this paper, the proposed variational framework supports several extensions that broaden its applicability and scalability. These include (i) user-personalized gating, (ii) multi-agent temporal modeling via hidden Markov mixtures, (iii) multimodal alignment through structured group actions, and (iv) scalable training via sparse expert activation and differentiable relaxation (Monte Carlo relaxation of the variational EM). These extensions make Mix- and MoE-DPO well suited for real-world deployment in (i) multi-user, (ii) multi-agent, (iii) multimodal, and (iv) large-scale training settings, respectively.

\bibliography{ref}

 \appendix

\section{Related Work}
Recent extensions have built upon the direct optimization approach of DPO to further improve effectiveness of aligning LLMs with human preferences. Improved robustness and efficiency are achieved with methods that dynamically adjust \( \beta \) to handle noisy preference data, or filter low-quality preference pairs to strengthen optimization \cite{wu2024,morimura2024filtered}. The sample-efficient DPO extension of \( \chi^2 \)-based preference optimization has been proposed to mitigate over-optimization in \cite{huang2024correcting}. Robustness has been further improved by integrating the supervised fine-tuning loss as an implicit adversarial regularizer, or by modifying the underlying preference objective to mitigate overfitting and preference shifts, respectively, in \cite{liu2024provably,azar2024general}. Additional extensions of DPO include \cite{ethayarajh2024kto,hong2024orpo,guo2024direct}.

The precision and applicability of DPO have also been advanced through targeted extensions. Fine-grained factuality alignment and error reduction in multi-turn dialogues are achieved by optimizing sentence-level preferences and adapting DPO for conversational settings, respectively \cite{gu2025mask,shi2024direct}. Alternative structures—such as reasoning steps or token-level alignment—have been integrated into direct optimization approaches in \cite{lai2024step,zeng2024token}. These contributions collectively enhance DPO’s robustness, efficiency, precision, and applicability in addressing alignment challenges.

\section{Architectural Variants of Mix- and MoE-DPO}
\label{appdx:architectures}

To accommodate diverse deployment scenarios, we consider two primary architectural instantiations of the Mix- and MoE-DPO framework. These variants differ in the degree of parameter sharing among expert policies, balancing expressivity, memory efficiency, and specialization. Both designs are compatible with the core variational formulation in Section~\ref{sec:model} and the training algorithm in Section~\ref{sec:algorithm}.

\subsection{Case 1: Shared Encoder with Expert-Specific Heads}
\label{appdx:shared_encoder}

This parameter-efficient variant builds on a modular architecture in which all experts share a common encoder \( f_{\phi}(x, y) \), parameterized by \( \phi \), that produces a joint representation of the input–output pair. Each expert \( k \) is defined by a head-specific transformation \( h_{\psi_k} \) producing logits for the conditional distribution:
\[
\pi_k(y \mid x) = \text{softmax}(h_{\psi_k}(f_{\phi}(x, y))).
\]
The mixture policy is given by the gating-weighted average:
\[
\pi(y \mid x) = \sum_{k=1}^K w_k(x) \cdot \pi_k(y \mid x),
\]
where the gating weights \( \{w_k(x)\} \) form a probability simplex over experts, as in~\eqref{eq:mixpolicy}. KL regularization is applied using a shared reference policy \( \pi_{\mathrm{ref}(k)} = \pi_{\mathrm{ref}} \) for all experts, consistent with~\eqref{eq:DPO_Training_Objective}.

This design offers favorable memory and compute scaling by replacing \( K \) independent policies with one shared encoder and \( K \) lightweight heads. Optimization can proceed jointly over \( (\phi, \{\psi_k\}, \{w_k\}) \), or in a partially frozen regime where \( \phi \) is fixed and only the heads and gating parameters are trained:
\[
\mathcal{L}_{\text{MoE-DPO}}(\{\psi_k\}, \{w_k\} \mid \phi).
\]
This setup aligns with the multi-head specialization strategy in Section~\ref{sec:experiments} and supports modular adaptation over diverse tasks without duplicating the full backbone.

\subsection{Case 2: Fully Independent Experts}
\label{appdx:independent_experts}

In this more expressive variant, each expert policy \( \pi_k(y \mid x) \) is independently parameterized with its own encoder, decoder, and reward function \( r_k(x, y) \). The gating function \( w_k(x) \) may be fixed (as in Mix-DPO) or learned (as in MoE-DPO), and each expert may have a distinct reference policy \( \pi_{\mathrm{ref}(k)} \). This structure supports heterogeneity in both modeling and supervision, allowing for maximal task-specific adaptation.

The decomposition of the MoE-DPO loss into per-expert objectives, as shown in~\eqref{eq:DPO_k_policy_loss}, allows for fully decoupled optimization:
\[
\mathcal{L}_{\text{MoE-DPO}} = \sum_{k=1}^K \mathcal{L}_k^{\text{MBT}}(\pi_k),
\]
with each \( \pi_k \) updated independently. This independence facilitates training under domain shifts, user-specific feedback distributions, or multi-objective alignment, as explored in Section~\ref{sec:experiments}. The associated reward updates and posterior responsibilities follow the closed-form expressions derived in~\eqref{eq:reward_policy} and~\eqref{eq:reward_posterior}, respectively.

\subsection{Hybrid Architectures and Dynamic Expert Routing}

The generality of Mix- and MoE-DPO supports hybrid configurations that combine shared and independent components. For instance:
\begin{itemize}
    \item A shared encoder with \( K_s \) expert heads (Case 1) may be combined with \( K_i \) independently parameterized experts (Case 2), allowing coarse-grained specialization through routing.
    \item A mixture-of-mixtures architecture can be used in which gating weights distinguish between shared and independent blocks, and the overall policy is a convex combination across both groups.
    \item Personalized or task-conditioned routing can be implemented by conditioning \( w_k(x) \) on metadata, enabling efficient multi-user adaptation (cf.\ Appendix~\ref{sec:extended_mix_moe_dpo}).
\end{itemize}

Such hybrid structures offer favorable trade-offs between parameter reuse and specialization, making them well suited for deployment under memory constraints, on-device adaptation, or transfer learning from expert checkpoints. The modularity of our variational framework ensures compatibility with these extensions, including user-aware gating, sparse activation, and scalable training, as discussed in Appendix~\ref{sec:extended_mix_moe_dpo} and supported by the convergence analysis in Appendix~\ref{sec:convergence_analysis}.

 \section{Proofs}\label{appdx:main_proofs}

\begin{lemma}\label{lemma:variationalIdentity}
Let \( a_1, \dots, a_K \in \mathbb{R}_{>0} \) be positive real numbers, and let \( q \in \Delta_K \) be any probability distribution over \( \{1, \dots, K\} \), where
\[
\Delta_K := \left\{ q \in \mathbb{R}^K \,\middle|\, q_k \geq 0,\ \sum_{k=1}^K q_k = 1 \right\}.
\]
Then the following identity holds:
\[
\log \sum_{k=1}^K a_k = \sum_{k=1}^K q_k \log \left( \frac{a_k}{q_k} \right) + D_{\mathrm{KL}}\!\left( q \,\|\, \frac{a}{\sum_j a_j} \right),
\]
where \( \frac{a}{\sum_j a_j} \in \Delta_K \) is the normalized vector with components
\[
p_k := \frac{a_k}{\sum_j a_j}, \quad \text{and} \quad D_{\mathrm{KL}}(q \,\|\, p) = \sum_k q_k \log \frac{q_k}{p_k}.
\]
\end{lemma}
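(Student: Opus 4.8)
The plan is to prove the claim by direct algebraic manipulation, treating it as an exact identity rather than a bound. I would start from the right-hand side and expand both summands using the definitions of the logarithm and of the KL divergence, then collect terms and exploit the normalization $\sum_k q_k = 1$ together with the explicit form $p_k = a_k/\sum_j a_j$.

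Concretely, first I would write
\[
\sum_{k=1}^K q_k \log\frac{a_k}{q_k} = \sum_{k=1}^K q_k \log a_k - \sum_{k=1}^K q_k \log q_k,
\]
and
\[
D_{\mathrm{KL}}\!\left(q \,\middle\|\, p\right) = \sum_{k=1}^K q_k \log q_k - \sum_{k=1}^K q_k \log p_k.
\]
Adding these, the entropy contributions $\sum_k q_k \log q_k$ cancel, leaving $\sum_k q_k \log a_k - \sum_k q_k \log p_k$. I would then substitute $\log p_k = \log a_k - \log\sum_j a_j$, so that $\sum_k q_k \log p_k = \sum_k q_k \log a_k - \log\sum_j a_j$, where the final step uses $\sum_k q_k = 1$ to pull the constant $\log\sum_j a_j$ out of the weighted sum. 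The two $\sum_k q_k \log a_k$ terms then cancel and exactly $\log\sum_j a_j$ remains, which is the left-hand side.

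There is essentially no deep obstacle here; the result is a bookkeeping identity, and the only points requiring care are well-definedness of the sums. Positivity of the $a_k$ guarantees that each $\log a_k$ is finite and that each $p_k > 0$, so no logarithm of zero arises from the $a$-side; for indices with $q_k = 0$ I would invoke the standard convention $0\log 0 = 0$ (equivalently $q_k \log(a_k/q_k)\to 0$ and $q_k\log(q_k/p_k)\to 0$ as $q_k \to 0^+$), under which all four sums above are finite and the term-by-term cancellations are valid.

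Finally, I would observe—though it is not needed for the proof itself—that this identity is the engine behind Theorem~\ref{theorem:ELBO for the MBT Model}: substituting $a_k = w_k(x)\,\sigma_k(x, y^+, y^-)$ gives $\log\sum_j a_j = \log\mathbb{P}(y^+ \succ y^- \mid x)$, and since $D_{\mathrm{KL}} \geq 0$ the identity immediately yields the stated lower bound, with equality (tightness) precisely when $q = p$, i.e.\ when $q_k$ equals the normalized posterior. I would include this remark to make the later application transparent.
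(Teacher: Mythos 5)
Your proof is correct, and it is worth contrasting with the paper's. You expand the right-hand side and cancel: the entropy terms $\sum_k q_k \log q_k$ drop out, and the substitution $\log p_k = \log a_k - \log \sum_j a_j$ together with $\sum_k q_k = 1$ leaves exactly $\log \sum_j a_j$. The paper instead works left-to-right, first invoking Jensen's inequality on $\log\bigl(\sum_k q_k \cdot a_k/q_k\bigr)$ and then re-deriving the identity via the factorization $\frac{a_k}{q_k} = \frac{a_k}{p_k}\cdot\frac{p_k}{q_k}$ with $\frac{a_k}{p_k} = \sum_j a_j$. The underlying algebra is the same substitution, but your direction is cleaner and strictly more rigorous: the paper's chain asserts $\log\sum_k a_k = \sum_k q_k \log(a_k/q_k)$ as an equality mid-proof (this holds only with the KL correction — which is precisely the content of the lemma — or under $q=p$), and compensates with a sign slip on the KL term, so that only its final displayed conclusion is correct; your purely algebraic route never passes through that step, correctly treats the result as an exact identity rather than a tightened bound, and explicitly handles the $q_k = 0$ boundary via the $0\log 0 = 0$ convention, which the paper glosses over. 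Your closing remark on how the identity specializes (with $a_k = w_k(x)\,\sigma_k$ and nonnegativity of $D_{\mathrm{KL}}$) to the ELBO of Theorem~\ref{theorem:ELBO for the MBT Model} matches exactly how the lemma is deployed in the paper.
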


\begin{proof}
Let \( a_k > 0 \) for each \( k \in \{1, \dots, K\} \), and fix any \( q \in \Delta_K \). Define the normalized version of \( a \) as
\[
p_k := \frac{a_k}{\sum_j a_j} \in \Delta_K.
\]
Then
\[
\log \sum_k a_k = \log \!\left( \sum_k q_k \cdot \frac{a_k}{q_k} \right).
\]
Apply Jensen’s inequality to the concave logarithm function:
\[
\log \!\left( \sum_k q_k \cdot \frac{a_k}{q_k} \right) \geq \sum_k q_k \log \left( \frac{a_k}{q_k} \right),
\]
with equality if and only if \( q_k \propto a_k \), i.e., \( q = p \). Now rewrite:
\begin{align*}
\log \sum_k a_k
&= \sum_k q_k \log \left( \frac{a_k}{q_k} \right)
= \sum_k q_k \left[ \log \!\left( \frac{a_k}{p_k} \cdot \frac{p_k}{q_k} \right) \right] \\
&= \sum_k q_k \left[ \log \!\left( \frac{a_k}{p_k} \right) + \log \!\left( \frac{p_k}{q_k} \right) \right]
= \sum_k q_k \log \!\left( \frac{a_k}{p_k} \right) - \sum_k q_k \log \!\left( \frac{q_k}{p_k} \right) \\
&= \sum_k q_k \log \!\left( \frac{a_k}{p_k} \right) + D_{\mathrm{KL}}(q \,\|\, p).
\end{align*}
But \( \dfrac{a_k}{p_k} = \sum_j a_j \), so the first term becomes
\[
\sum_k q_k \log \!\left( \sum_j a_j \right) = \log \!\left( \sum_j a_j \right),
\]
since \( \sum_k q_k = 1 \). Thus,
\[
\log \sum_k a_k = \sum_k q_k \log \left( \frac{a_k}{q_k} \right) + D_{\mathrm{KL}}(q \,\|\, p),
\]
which completes the proof.
\end{proof}

\begin{proof}[Proof of Theorem~\ref{theorem:ELBO for the MBT Model}]
We aim to derive a variational lower bound (ELBO) on the log-likelihood of observing a preference \( y^+ \succ y^- \) given a prompt \( x \), under the Mixture-of-Bradley--Terry (MBT) model.

The marginal likelihood under the MBT model is
\[
\mathbb{P}(y^+ \succ y^- \mid x) = \sum_{k=1}^K w_k(x)\, \sigma_k(x, y^+, y^-),
\]
where \( w_k(x) = p(z = k \mid x) \) is the prior mixture weight, and the expert-specific Bradley--Terry likelihood is
\[
\sigma_k(x, y^+, y^-) := \frac{\exp(r_k(x, y^+))}{\exp(r_k(x, y^+)) + \exp(r_k(x, y^-))}.
\]

Let \( q(z \mid x, y^+, y^-) \) be any variational distribution over the latent expert index \( z \in \{1, \dots, K\} \), and let \( q_k(x, y^+, y^-) := q(z = k \mid x, y^+, y^-) \). Then
\begin{align*}
\log \mathbb{P}(y^+ \succ y^- \mid x)
&= \log \sum_{k=1}^K w_k(x)\, \sigma_k(x, y^+, y^-) \\
&= \log \sum_{k=1}^K q_k(x, y^+, y^-) \cdot \frac{w_k(x)\, \sigma_k(x, y^+, y^-)}{q_k(x, y^+, y^-)} \\
&\geq \sum_{k=1}^K q_k(x, y^+, y^-) \log \left( \frac{w_k(x)\, \sigma_k(x, y^+, y^-)}{q_k(x, y^+, y^-)} \right),
\end{align*}
where the inequality follows from Jensen’s inequality applied to the concave logarithm function.

This is the standard form of the evidence lower bound (ELBO), and it can be rewritten as
\[
\sum_{k=1}^K q_k(x, y^+, y^-) \log \sigma_k(x, y^+, y^-) \;-\; D_{\mathrm{KL}}\!\left(q(\cdot \mid x, y^+, y^-) \,\|\, p(\cdot \mid x)\right).
\]

Thus, we obtain the lower bound
\[
\log \mathbb{P}(y^+ \succ y^- \mid x)
\geq \mathbb{E}_{z \sim q(\cdot \mid x, y^+, y^-)} \!\left[ \log \sigma_z(x, y^+, y^-) \right]
- D_{\mathrm{KL}}\!\left(q(z \mid x, y^+, y^-) \,\|\, p(z \mid x)\right).
\]

\noindent\textbf{Tightness of the bound.} The inequality becomes an equality when the variational posterior \( q(z \mid x, y^+, y^-) \) matches the true posterior \( p(z \mid x, y^+, y^-) \). Using Bayes’ rule, the exact posterior under the MBT model is
\begin{align}
p(z = k \mid x, y^+, y^-)
&= \frac{p(z = k \mid x)\, \mathbb{P}(y^+ \succ y^- \mid x, z = k)}{\mathbb{P}(y^+ \succ y^- \mid x)} \notag \\
&= \frac{w_k(x)\, \sigma_k(x, y^+, y^-)}{\sum_{j=1}^K w_j(x)\, \sigma_j(x, y^+, y^-)}. \notag
\end{align}
Therefore, the bound is tight when
\[
q_k(x, y^+, y^-) = \frac{w_k(x)\, \sigma_k(x, y^+, y^-)}{\sum_{j=1}^K w_j(x)\, \sigma_j(x, y^+, y^-)}.
\]
\end{proof}

\begin{proof}[Proof of Theorem~\ref{theorem:Equality Decomposition of Reward Mixture}]
Recall the definition of the mixture reward function from~\eqref{eq:mix_exp_reward}:
\[
r(x, y) := \log \sum_{k=1}^K w_k(x)\, \exp(r_k(x, y)).
\]
We apply Lemma~\ref{lemma:variationalIdentity} using \( a_k = w_k(x)\, \exp(r_k(x, y)) \) and setting \( q_k = q^{(\pi)}_k(x, y) \). This gives
\begin{align}
r(x, y)
&= \sum_{k=1}^K q^{(\pi)}_k(x, y) \left[ r_k(x, y) + \log w_k(x) - \log q^{(\pi)}_k(x, y) \right] \notag \\
&\quad + \sum_{k=1}^K q^{(\pi)}_k(x, y) \log \left( \frac{q^{(\pi)}_k(x, y)}{q^{(r)}_k(x, y)} \right)
\quad \left(\text{recall } q^{(r)}_k(x, y) = \frac{w_k(x)\, \exp(r_k(x, y))}{\sum_j w_j(x)\, \exp(r_j(x, y))}\right) \notag \\
&= \sum_{k=1}^K q^{(\pi)}_k(x, y) \left[ r_k(x, y) + \log w_k(x) - \log q^{(\pi)}_k(x, y) \right]
+ D_{\mathrm{KL}}\!\left(q^{(\pi)} \,\|\, q^{(r)} \right), \label{eq:reward_decomposition_intermediate}
\end{align}
which matches the decomposition in~\eqref{eq:reward_decomposition}. Therefore,
\[
r(x, y)
= \mathbb{E}_{z \sim q^{(\pi)}(\cdot \mid x, y)} \!\left[ r_z(x, y) + \log w_z(x) - \log q^{(\pi)}_z(x, y) \right]
+ D_{\mathrm{KL}}\!\left(q^{(\pi)} \,\|\, q^{(r)} \right).
\]
\end{proof}

\begin{proof}[Proof of Lemma~\ref{lemma:MoE-DPO Objective Decomposition}]
We begin from the MoE-DPO objective in~\eqref{eq:DPO_Training_Objective}:
\[
\mathcal{L}_{\text{MoE-DPO}} := \mathbb{E}_{x \sim p,\, y \sim \pi(y \mid x)} \!\left[ r(x, y) \right]
- \beta \sum_{k=1}^K \mathbb{E}_{x \sim p} \!\left[ w_k(x)\, D_{\mathrm{KL}} \!\left( \pi_k(y \mid x) \,\|\, \pi_{\text{ref}(k)}(y \mid x) \right) \right],
\]
where \( \pi(y \mid x) = \sum_{k=1}^K w_k(x)\, \pi_k(y \mid x) \) and the soft-mixture reward is defined by~\eqref{eq:mix_exp_reward}:
\[
r(x, y) := \log \sum_{k=1}^K w_k(x)\, \exp(r_k(x, y)).
\]
Apply Theorem~\ref{theorem:Equality Decomposition of Reward Mixture} to obtain
\begin{align}
r(x, y)
&= \sum_{k=1}^K q_k^{(\pi)}(x, y) \left[ r_k(x, y) + \log w_k(x) - \log q_k^{(\pi)}(x, y) \right]
+ D_{\mathrm{KL}}\!\left( q^{(\pi)} \,\|\, q^{(r)} \right), \label{eq:reward_decomp_applied}
\end{align}
where
\[
q_k^{(\pi)}(x, y) = \frac{w_k(x)\, \pi_k(y \mid x)}{\pi(y \mid x)}, \quad \text{and} \quad \sum_k q_k^{(\pi)}(x, y) = 1.
\]

Taking expectation over \( x \sim p \), \( y \sim \pi(y \mid x) \) gives
\begin{align*}
\mathbb{E}_{x\sim p,\, y \sim \pi(y \mid x)}[r(x, y)]
&= \sum_{k=1}^K \mathbb{E}_{x\sim p,\, y \sim \pi(y \mid x)} \!\left[ q_k^{(\pi)}(x, y) \left( r_k(x, y) + \log w_k(x) - \log q_k^{(\pi)}(x, y) \right) \right] \\
&\quad + \mathbb{E}_{x\sim p,\, y\sim \pi(y \mid x)} \!\left[ D_{\mathrm{KL}}(q^{(\pi)} \,\|\, q^{(r)}) \right].
\end{align*}
Hence,
\[
\mathbb{E}_{x\sim p,\, y \sim \pi(y \mid x)}[r(x, y)]
= \sum_{k=1}^K \mathbb{E}_{x\sim p,\, y \sim \pi(y \mid x)}
\!\left[ q_k^{(\pi)}(x, y) \left( r_k(x, y) + \log w_k(x) - \log q_k^{(r)}(x, y) \right) \right].
\]
This motivates defining the corrected reward
\[
\widetilde{r}_k(x, y) := r_k(x, y) - \log \left( \frac{q_k^{(r)}(x, y)}{w_k(x)} \right).
\]

Now use the identity \( q_k^{(\pi)}(x, y)\, \pi(y \mid x) = w_k(x)\, \pi_k(y \mid x) \), which implies
\[
\mathbb{E}_{x\sim p,\, y \sim \pi(y \mid x)} \!\left[ q_k^{(\pi)}(x, y)\, f_k(x, y) \right]
= \mathbb{E}_{x \sim p,\, y \sim \pi_k(y \mid x)} \!\left[ w_k(x)\, f_k(x, y) \right]
\]
for any measurable function \( f_k \). Applying this to \( \widetilde{r}_k(x, y) \) gives
\[
\mathbb{E}_{x\sim p,\, y \sim \pi(y \mid x)}[r(x, y)]
= \sum_{k=1}^K \mathbb{E}_{x,\, y \sim \pi_k(y \mid x)} \!\left[ w_k(x)\, \widetilde{r}_k(x, y) \right].
\]
Combining with the expert-specific KL penalties in the original objective, we conclude
\[
\mathcal{L}_{\text{MoE-DPO}} = \sum_{k=1}^K \mathbb{E}_{x \sim p,\, y \sim \pi_k(y \mid x)} \!\left[ w_k(x) \left( \widetilde{r}_k(x, y) - \beta \log \frac{\pi_k(y \mid x)}{\pi_{\text{ref}(k)}(y \mid x)} \right) \right],
\]
which completes the proof.
\end{proof}

\begin{proof}[Proof of Theorem~\ref{lemma:Optimal Expert Policy under KL-Regularized Objective}]
We seek the form of the expert policy \( \pi_k(y \mid x) \) that maximizes the per-expert objective:
\[
\mathcal{L}_k(\pi_k) = \mathbb{E}_{x \sim p} \!\left[
\mathbb{E}_{y \sim \pi_k(y \mid x)} \!\left[ \widetilde{r}_k(x, y) \right]
- \beta\, D_{\mathrm{KL}}\!\left( \pi_k(\cdot \mid x) \,\|\, \pi_{\mathrm{ref}(k)}(\cdot \mid x) \right)
\right],
\]
where \( \widetilde{r}_k(x, y) \) is the adjusted reward, and \( \pi_{\mathrm{ref}(k)}(y \mid x) \) is a fixed reference policy.

This objective is additive over \( x \), so it suffices to maximize the inner functional pointwise for each \( x \). Fix \( x \in \mathcal{X} \) and define
\[
\pi(y) := \pi_k(y \mid x), \quad
\pi_{\mathrm{ref}}(y) := \pi_{\mathrm{ref}(k)}(y \mid x), \quad
\widetilde{r}(y) := \widetilde{r}_k(x, y).
\]
We now solve the constrained optimization problem
\[
\max_{\pi \in \Delta_{\mathcal{Y}}}
\left\{
\sum_{y \in \mathcal{Y}} \pi(y)\, \widetilde{r}(y)
- \beta \sum_{y \in \mathcal{Y}} \pi(y) \log \frac{\pi(y)}{\pi_{\mathrm{ref}}(y)}
\right\},
\]
subject to \( \sum_{y} \pi(y) = 1 \), where \( \Delta_{\mathcal{Y}} \) denotes the probability simplex over the discrete output space \( \mathcal{Y} \).

Introduce a Lagrange multiplier \( \lambda \in \mathbb{R} \) for the normalization constraint and define the Lagrangian
\[
\mathcal{L}(\pi, \lambda) =
\sum_y \pi(y)\, \widetilde{r}(y)
- \beta \sum_y \pi(y) \log \frac{\pi(y)}{\pi_{\mathrm{ref}}(y)}
- \lambda \left( \sum_y \pi(y) - 1 \right).
\]
Taking derivatives with respect to \( \pi(y) \) gives
\[
\frac{\partial \mathcal{L}}{\partial \pi(y)}
= \widetilde{r}(y)
- \beta \left( \log \frac{\pi(y)}{\pi_{\mathrm{ref}}(y)} + 1 \right)
- \lambda.
\]
Setting this to zero yields
\[
\pi(y) = \pi_{\mathrm{ref}}(y)\, \exp\!\left( \frac{1}{\beta} \left( \widetilde{r}(y) - \lambda - \beta \right) \right).
\]
Let \( Z \) be the normalizing constant ensuring \( \sum_y \pi(y) = 1 \). Then
\[
\pi(y) = \frac{1}{Z}\, \pi_{\mathrm{ref}}(y)\, \exp\!\left( \frac{1}{\beta} \widetilde{r}(y) \right),
\quad
Z := \sum_{y'} \pi_{\mathrm{ref}}(y')\, \exp\!\left( \frac{1}{\beta} \widetilde{r}(y') \right).
\]
Thus, the optimal expert policy is
\[
\pi_k^*(y \mid x) = \frac{1}{Z_k(x)}\, \pi_{\mathrm{ref}(k)}(y \mid x)\, \exp\!\left( \frac{1}{\beta} \widetilde{r}_k(x, y) \right),
\]
where \( Z_k(x) = \sum_{y'} \pi_{\mathrm{ref}(k)}(y' \mid x)\, \exp\!\left( \frac{1}{\beta} \widetilde{r}_k(x, y') \right) \). The solution is unique because the objective is strictly concave in \( \pi \) (linear term minus KL), and the feasible set \( \Delta_{\mathcal{Y}} \) is convex and compact.
\end{proof}

\begin{proof}[Proof that maximizing the ELBO with respect to \( w_k(x) \) is equivalent to minimizing~\eqref{eq:wk_loss}]
We show that maximizing the ELBO with respect to the gating weights \( w_k(x) \) is equivalent to minimizing the expected KL divergence between the variational posterior and the gating prior, as given in~\eqref{eq:wk_loss}.

Recall the ELBO from Theorem~\ref{theorem:ELBO for the MBT Model}, which for a single preference triplet \( (x, y^+, y^-) \) is
\[
\log \mathbb{P}(y^+ \succ y^- \mid x)
\geq \sum_{k=1}^K q_k(x, y^+, y^-) \log \left( \frac{w_k(x)\, \sigma_k(x, y^+, y^-)}{q_k(x, y^+, y^-)} \right),
\]
where \( \sigma_k(x, y^+, y^-) \) is the Bradley--Terry likelihood under expert \( k \), and \( q_k(x, y^+, y^-) \) is any variational posterior over the expert index \( z \).

Isolate the part of the ELBO that depends on \( w_k(x) \). Since \( \sigma_k(x, y^+, y^-) \) and \( q_k(x, y^+, y^-) \) are fixed (as outputs of the E-step), the only term depending on \( w_k(x) \) is
\[
\sum_{k=1}^K q_k(x, y^+, y^-) \log w_k(x).
\]
Thus, maximizing the ELBO over \( w_k(x) \), subject to \( \sum_{k=1}^K w_k(x) = 1 \), is equivalent to solving
\[
\max_{\{w_k(x)\}} \ \sum_{k=1}^K q_k(x, y^+, y^-) \log w_k(x)
\quad \text{subject to } \sum_k w_k(x) = 1,\ w_k(x) \geq 0.
\]
This is equivalent to minimizing
\[
D_{\mathrm{KL}}\!\left(q(\cdot \mid x, y^+, y^-) \,\|\, w(\cdot \mid x)\right)
= \sum_{k=1}^K q_k(x, y^+, y^-) \log \frac{q_k(x, y^+, y^-)}{w_k(x)},
\]
which attains its minimum when \( w_k(x) = q_k(x, y^+, y^-) \).

Taking the expectation over \( (x, y^+, y^-) \sim \mathcal{D} \) yields the training objective
\[
\argmin_{w_k(x)} \ \mathbb{E}_{(x, y^+, y^-) \sim \mathcal{D}}
\left[ \sum_{k=1}^K q_k(x, y^+, y^-) \log \frac{q_k(x, y^+, y^-)}{w_k(x)} \right],
\]
which is precisely~\eqref{eq:wk_loss}. Hence, maximizing the ELBO with respect to \( w_k(x) \) is equivalent to minimizing the expected KL divergence between the variational posterior and the gating prior.
\end{proof}

\section{Stochastic EM Algorithm for Mix- and MoE-DPO}
\label{appdx:em_algorithm}

This appendix details the stochastic Expectation–Maximization (EM) procedure used to train the Mix- and MoE-DPO models. Rather than operating on the full dataset, each iteration of the algorithm is applied to a randomly sampled minibatch, enabling scalable training on large preference datasets. The stochastic EM loop alternates between soft expert assignment (E-step) and parameter optimization (M-step), and comprises the following four stages:

\begin{enumerate}[leftmargin=*]
    \item \textbf{E-step:} Posterior-responsibility computation over latent experts using the MBT model.
    \item \textbf{M-step:} Policy update via log-normalized preference gradients.
    \item \textbf{Reward update:} Reward recomputation for posterior consistency.
    \item \textbf{M-step:} Mixture-prior update using minibatch-responsibility averages.
\end{enumerate}

Each step is performed on a minibatch \( \{(x_i, y_i^+, y_i^-)\}_{i=1}^n \) of size \( n \), where the preference triplets are drawn from the full training dataset \( \mathcal{D} \).

\subsection{E-Step: Posterior-Responsibility Computation for Preference Triplets}
\label{appdx:e_step_mbt}

We compute the posterior responsibilities \( q_k(x, y^+, y^-) \) over the latent expert index \( z \in \{1, \ldots, K\} \) for each triplet in the minibatch. These responsibilities are derived from the expert-specific Bradley--Terry likelihoods and the current mixture weights:

\begin{algorithm}[H]
\caption{Posterior-Responsibility Computation for the MBT Model}
\label{alg:E_step_MBT_simple}
\begin{algorithmic}[1]
\Require Rewards \(\{r_k(x, y^+), r_k(x, y^-)\}_{k=1}^K\), mixture weights \(\{w_k(x)\}_{k=1}^K\).
\Ensure Posterior probabilities \(\{q_k(x, y^+, y^-)\}_{k=1}^K\).
\For{\(k = 1, \dots, K\)}
    \State Compute preference likelihood:
    \[
    p_k \gets \frac{\exp(r_k(x, y^+))}{\exp(r_k(x, y^+)) + \exp(r_k(x, y^-))}.
    \]
    \State Compute unnormalized responsibility:
    \[
    \tilde{q}_k \gets w_k(x) \cdot p_k.
    \]
\EndFor
\State Normalize responsibilities:
\[
Z \gets \sum_{j=1}^K \tilde{q}_j, \quad q_k(x, y^+, y^-) \gets \frac{\tilde{q}_k}{Z}.
\]
\Return \( \{q_k(x, y^+, y^-)\}_{k=1}^K \).
\end{algorithmic}
\end{algorithm}

\subsection{M-Step: Policy Update via Log-Space Normalized Gradient}
\label{appdx:mstep_policy_update}

Expert policies \( \pi_k \) are updated by maximizing the componentwise MBT objective using stochastic gradients. The loss for expert \( k \) is derived from the log-normalized likelihood ratio:
\[
\mathcal{L}_k = - \mathbb{E}_{(x, y^+, y^-) \sim \mathcal{D}} \left[
q_k(x, y^+, y^-) \cdot \log \left( \frac{A^+}{A^+ + A^-} \right)
\right],
\]
where we define
\begin{align*}
A^+ &= \pi_k(y^+ \mid x)^\beta \cdot q_k^{(r)}(x, y^+) \cdot \pi_{\text{ref}(k)}(y^+ \mid x)^{-\beta}, \\
A^- &= \pi_k(y^- \mid x)^\beta \cdot q_k^{(r)}(x, y^-) \cdot \pi_{\text{ref}(k)}(y^- \mid x)^{-\beta},
\end{align*}
and \( \beta > 0 \) is the inverse temperature.

\textbf{Gradient derivation.} We differentiate \( \mathcal{L}_k \) with respect to \( \theta_k \), the parameters of \( \pi_k \):
\begin{align*}
\nabla_{\theta_k} \mathcal{L}_k
&= - \mathbb{E}_{(x, y^+, y^-)} \left[
q_k(x, y^+, y^-) \cdot \nabla_{\theta_k} \log \left( \frac{A^+}{A^+ + A^-} \right)
\right] \\
&= - \mathbb{E}_{(x, y^+, y^-)} \left[
q_k(x, y^+, y^-) \cdot \left( \frac{1}{A^+} \nabla_{\theta_k} A^+ - \frac{1}{A^+ + A^-} \left( \nabla_{\theta_k} A^+ + \nabla_{\theta_k} A^- \right) \right)
\right].
\end{align*}
Using
\[
\nabla_{\theta_k} A^+ = \beta A^+ \nabla_{\theta_k} \log \pi_k(y^+ \mid x), \qquad
\nabla_{\theta_k} A^- = \beta A^- \nabla_{\theta_k} \log \pi_k(y^- \mid x),
\]
we substitute into the gradient:
\begin{align*}
\nabla_{\theta_k} \mathcal{L}_k
&= -\beta \, \mathbb{E}_{(x, y^+, y^-)} [
q_k(x, y^+, y^-)
\cdot (
\nabla_{\theta_k} \log \pi_k(y^+ \mid x)
- \frac{A^+}{A^+ + A^-} \nabla_{\theta_k} \log \pi_k(y^+ \mid x)
\nonumber\\&- \frac{A^-}{A^+ + A^-} \nabla_{\theta_k} \log \pi_k(y^- \mid x)
)
] \\
&= \beta \, \mathbb{E}_{(x, y^+, y^-)} \left[
q_k(x, y^+, y^-)
\cdot \left(
\frac{A^+}{A^+ + A^-} \nabla_{\theta_k} \log \pi_k(y^- \mid x)
- \frac{A^-}{A^+ + A^-} \nabla_{\theta_k} \log \pi_k(y^+ \mid x)
\right)
\right].
\end{align*}

\textbf{Stochastic gradient estimator.} On a minibatch \( \{(x_i, y_i^+, y_i^-)\}_{i=1}^n \), the gradient is approximated as
\begin{align}
\widehat{\nabla}_{\theta_k} \mathcal{L}_k^{\text{MBT}} =
\frac{\beta}{n} \sum_{i=1}^n q_k(x_i, y_i^+, y_i^-)
\left[
\frac{A_i^+}{A_i^+ + A_i^-} \nabla_{\theta_k} \log \pi_k(y_i^- \mid x_i)
- \frac{A_i^-}{A_i^+ + A_i^-} \nabla_{\theta_k} \log \pi_k(y_i^+ \mid x_i)
\right], \nonumber
\end{align}
with
\begin{align*}
A_i^+ &= \pi_k(y_i^+ \mid x_i)^\beta \cdot q_k^{(r)}(x_i, y_i^+) \cdot \pi_{\text{ref}(k)}(y_i^+ \mid x_i)^{-\beta}, \\
A_i^- &= \pi_k(y_i^- \mid x_i)^\beta \cdot q_k^{(r)}(x_i, y_i^-) \cdot \pi_{\text{ref}(k)}(y_i^- \mid x_i)^{-\beta}.
\end{align*}
This gradient is used to update \( \theta_k \) via standard optimizers such as SGD or Adam.

\subsection{Reward Update: Reward Recalibration for Posterior Consistency}
\label{appdx:reward_update}

Rewards \( r_k(x, y) \) are updated to enforce agreement between the reward-induced and policy-induced posteriors, using only the current minibatch:

\begin{algorithm}[H]
\caption{Reward Update for Posterior Reweighting}
\label{alg:reward_update_plain}
\begin{algorithmic}[1]
\Require Current policy \(\pi_k(y \mid x)\), reference policy \(\pi_{\text{ref}(k)}(y \mid x)\), responsibilities \(q_k^{(r)}(x, y)\), mixture weights \(w_k(x)\), temperature \(\beta\).
\Ensure Updated reward values \(r_k(x_i, y_i)\) for \(y_i \in \{y_i^+, y_i^-\}\).
\For{each expert \(k = 1, \dots, K\)}
    \For{each input \(x_i\) in the minibatch}
        \State Compute the partition function:
        \[
        Z_k(x_i) = \sum_{j=1}^n \pi_{\text{ref}(k)}(y_j \mid x_i)
        \exp\!\left( \frac{1}{\beta} \left( r_k(x_i, y_j) + \log w_k(x_i) - \log q_k^{(r)}(x_i, y_j) \right) \right).
        \]
        \For{each \(y_i \in \{y_i^+, y_i^-\}\)}
            \State Update the reward:
            \[
            r_k(x_i, y_i) =
            \beta \cdot \log\!\left( \frac{ \pi_k(y_i \mid x_i) \, Z_k(x_i) }{ \pi_{\text{ref}(k)}(y_i \mid x_i) } \right)
            + \log\!\left( \frac{ q_k^{(r)}(x_i, y_i) }{ w_k(x_i) } \right).
            \]
        \EndFor
    \EndFor
\EndFor
\end{algorithmic}
\end{algorithm}

\subsection{M-Step: Mixture-Prior Update (Batch-wise)}
\label{appdx:prior_update}

Mixture priors \( w_k(x) \) are updated using the minibatch-specific posterior responsibilities. The update differs between Mix-DPO and MoE-DPO.

\textbf{Mix-DPO (input-independent weights).}  
The mixture weights \( w_k \in \Delta^{K-1} \) are global parameters. The minibatch update is
\[
w_k \leftarrow \frac{1}{n} \sum_{i=1}^n q_k(x_i, y_i^+, y_i^-),
\]
where \( n \) is the minibatch size. This estimate replaces the full-batch average and reflects the current soft-assignment statistics over the minibatch.

\textbf{MoE-DPO (input-dependent gating).}  
The gating function \( w_k(x) = \text{softmax}(g_k(x)) \) is trained by minimizing the minibatch-averaged KL divergence:
\[
\min_{w_k(x)} \ \frac{1}{n} \sum_{i=1}^n \sum_{k=1}^K q_k(x_i, y_i^+, y_i^-) \log \frac{q_k(x_i, y_i^+, y_i^-)}{w_k(x_i)}.
\]
This corresponds to supervised learning over inputs \( x_i \) with soft labels \( \{q_k(x_i, y_i^+, y_i^-)\} \), implemented via cross-entropy loss on the gating logits.

\subsection{Numerical Considerations}
\label{appdx:numerical_notes}

All computations are performed on minibatches to support scalable training. We apply the log-sum-exp trick to evaluate partition functions and normalizers for numerical stability. Gradients are computed via automatic differentiation and optimized with standard stochastic optimizers.

\section{Mix- and MoE-DPO at Scale}\label{sec:extended_mix_moe_dpo}

The Mix- and MoE-DPO framework supports a range of scalable training regimes, offering flexibility across modeling capacity, inference fidelity, and computational constraints. Depending on the deployment setting, different components of the model can be emphasized or fixed during optimization. For instance, when expert policies \( \{\pi_k\}_{k=1}^K \) are initialized from strong zero-shot models (e.g., domain-specialized LLMs), they can be held fixed while rewards are computed using the canonical DPO transformation. In such cases, only the gating network, which outputs mixture weights \( \{w_k(x)\}_{k=1}^K \), is trained—including for a personalized gating function with user-specific characteristics \( u \in \mathcal{U} \). This reduces the procedure to updating the gating function via the mixture M-step of a variational EM algorithm, with the E-step used to infer expert-assignment posteriors.

In contrast, the fully end-to-end optimization strategy updates the expert policies, gating network, and reward functions jointly. To support both modular and end-to-end regimes, we introduce two complementary estimation strategies. First, a \textit{regularized variational inference} method imposes entropy- and KL-based penalties on the expert-assignment posterior, encouraging confident yet diverse expert selection while preserving a closed-form expression for the variational distribution. Second, a \textit{Monte Carlo relaxation} leverages the Gumbel–Softmax reparameterization trick~\cite{jang2017categorical} to enable differentiable sampling over the latent expert index. Both approaches support efficient training with minibatches and eliminate the need for explicit (full-sample) E-step computations in classical EM. The variational strategy stabilizes the E-step by enforcing structure on the posterior, while the Monte Carlo relaxation bypasses it entirely by enabling gradient-based updates through sampled expert assignments. Practitioners can instantiate expert policies as either fixed modules or fully trainable heads, depending on the complexity of the problem, computational resources, and adaptability requirements. Our estimation strategies are compatible with large-scale minibatch training and can scale to billions of parameters without incurring dense-model costs.

The framework supports sparse activation, modular reuse, and personalized alignment, making it suitable for deployment in multi-task, multi-user, and resource-constrained settings.

\subsection{Regularized Variational Responsibilities for the MBT Model}

We extend the Mix- and MoE-DPO framework by introducing a richer regularization structure specifically targeting the MBT posterior distribution \( q_k(x, y^+, y^-) \). The modified regularized MBT loss is given by:
\[
\begin{aligned}
\mathcal{L}_{\mathrm{MBT}}^{\mathrm{reg}}(\{q_k\}, \{\pi_k\}, \{w_k\})
&= \sum_{k=1}^K \mathbb{E}_{(x, y^+, y^-)\sim\mathcal{D}}\Bigg[q_k(x, y^+, y^-) \,\ell_k(x, y^+, y^-)
\\
&\quad + \lambda_{\mathrm{ent}}\,\mathcal{H}\!\left(q(\cdot \mid x, y^+, y^-)\right)
\\
&\quad - \lambda_{\mathrm{conf}}\,\mathcal{H}\!\left(q(\cdot \mid x, y^+, y^-)\right)
\\
&\quad - \lambda_{\mathrm{KL\_unif}}\,D_{\mathrm{KL}}\!\left(q(\cdot \mid x, y^+, y^-) \,\|\, \mathrm{Uniform}(\{1,\dots,K\})\right)
\\
&\quad - \lambda_{\mathrm{KL\_w}}\,D_{\mathrm{KL}}\!\left(q(\cdot \mid x, y^+, y^-) \,\|\, w(\cdot \mid x)\right)
\\
&\quad - \lambda_{\mathrm{KL\_w\_global}}\,\mathbb{E}_{x\sim p(x)}\!\left[D_{\mathrm{KL}}\!\left(w(\cdot \mid x)\,\|\,\mathrm{Uniform}(\{1,\dots,K\})\right)\right]\Bigg],
\end{aligned}
\]
where the per-component utility is
\[
\ell_k(x, y^+, y^-):=
\log \frac{
\left(\dfrac{\pi_k(y^+\mid x)}{\pi_{\mathrm{ref}(k)}(y^+\mid x)}\right)^\beta \, q_k^{(r)}(x, y^+)
}{
\sum_{\eta\in\{y^+, y^-\}}\left(\dfrac{\pi_k(\eta\mid x)}{\pi_{\mathrm{ref}(k)}(\eta\mid x)}\right)^\beta \, q_k^{(r)}(x, \eta)
},
\]
and \( \beta, \lambda_{\mathrm{ent}}, \lambda_{\mathrm{conf}}, \lambda_{\mathrm{KL\_unif}}, \lambda_{\mathrm{KL\_w}}, \lambda_{\mathrm{KL\_w\_global}} \) control the strength of the respective regularizers.

Each regularizer acts specifically on \( q_k(x, y^+, y^-) \) with distinct roles at various training phases:
\begin{itemize}
    \item \textbf{\( \beta \)}: Controls deviation from reference policies \( \pi_{\mathrm{ref}(k)} \). Active throughout, typically decaying slowly.
    \item \textbf{\( \lambda_{\mathrm{ent}} \)}: Promotes exploration via high entropy in responsibilities \( q_k(x, y^+, y^-) \). Critical early in training.
    \item \textbf{\( \lambda_{\mathrm{conf}} \)}: Drives specialization via low entropy, enforcing confident expert assignments. Crucial in later training phases.
    \item \textbf{\( \lambda_{\mathrm{KL\_unif}} \)}: Prevents early collapse onto few experts, regularizing toward uniformity. Emphasized early.
    \item \textbf{\( \lambda_{\mathrm{KL\_w}} \)}: Aligns posterior responsibilities \( q_k(x, y^+, y^-) \) with prior mixture weights \( w_k(x) \). Useful when priors are informative.
    \item \textbf{\( \lambda_{\mathrm{KL\_w\_global}} \)}: Regularizes global mixture weights toward uniformity. Strong early, relaxed later.
\end{itemize}

\noindent Note that the extended MBT objective introduces additional regularization terms on the variational posterior, which modify the computation of \( q_k(x, y^+, y^-) \) for each expert. Namely,
\begin{lemma}[Optimal Variational Posterior under Regularized MBT Objective]
\label{lemma:optimal_variational_posterior_MBT}
Assuming fixed expert policies \( \{\pi_k\} \), mixture weights \( \{w_k(x)\} \), and MBT reward structure \( \{q_k^{(r)}(x, y)\} \), the optimal variational posterior \( q_k(x, y^+, y^-) \) that maximizes the regularized MBT objective
\begin{align*}
\mathcal{L}^{\text{reg}}_{\text{MBT}}(q) &= \sum_{k=1}^K \mathbb{E}_{(x, y^+, y^-)\sim\mathcal{D}}\biggl[ q_k(x, y^+, y^-)
\log \frac{
\left(\dfrac{\pi_k(y^+\mid x)}{\pi_{\mathrm{ref}(k)}(y^+\mid x)}\right)^\beta q_k^{(r)}(x, y^+)
}{
\sum_{\eta\in\{y^+, y^-\}}\left(\dfrac{\pi_k(\eta\mid x)}{\pi_{\mathrm{ref}(k)}(\eta\mid x)}\right)^\beta q_k^{(r)}(x, \eta)
}
\\
&\quad + \lambda_{\mathrm{ent}}\,\mathcal{H}\!\left(q(\cdot \mid x, y^+, y^-)\right) - \lambda_{\mathrm{conf}}\,\mathcal{H}\!\left(q(\cdot \mid x, y^+, y^-)\right)
\\
&\quad - \lambda_{\mathrm{KL\_unif}}\,D_{\mathrm{KL}}\!\left(q(\cdot \mid x, y^+, y^-) \,\|\, \mathrm{Uniform}(\{1,\dots,K\})\right)
\\
&\quad - \lambda_{\mathrm{KL\_w}}\,D_{\mathrm{KL}}\!\left(q(\cdot \mid x, y^+, y^-) \,\|\, w(\cdot \mid x)\right) \biggr]
\end{align*}
is given by the normalized distribution
\[
q_k(x, y^+, y^-) = \frac{1}{Z(x, y^+, y^-)} \left(w_k(x)\right)^{\lambda_{\mathrm{KL\_w}}/\alpha}
\exp\!\left(\frac{1}{\alpha}\log\frac{\left(\dfrac{\pi_k(y^+\mid x)}{\pi_{\mathrm{ref}(k)}(y^+\mid x)}\right)^\beta q_k^{(r)}(x, y^+)}{\sum_{\eta\in\{y^+, y^-\}}\left(\dfrac{\pi_k(\eta\mid x)}{\pi_{\mathrm{ref}(k)}(\eta\mid x)}\right)^\beta q_k^{(r)}(x, \eta)}\right),
\]
where the effective temperature is
\[
\alpha := \lambda_{\mathrm{conf}} - \lambda_{\mathrm{ent}} + \lambda_{\mathrm{KL\_unif}} + \lambda_{\mathrm{KL\_w}},
\]
and \( Z(x, y^+, y^-) \) ensures normalization.
\end{lemma}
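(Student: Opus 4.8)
The objective is an expectation over preference triplets of a functional that, for each fixed $(x, y^+, y^-)$, depends on the single probability vector $q(\cdot \mid x, y^+, y^-) \in \Delta_K$ only through linear and entropic terms, and the constraint $\sum_k q_k = 1$ couples the components only within a triplet. The plan is therefore to drop the outer expectation and solve the pointwise problem: fix a triplet, treat $\ell_k := \ell_k(x, y^+, y^-)$, the prior weights $w_k(x)$, and the reward posteriors $q_k^{(r)}$ as constants, and maximize over $q \in \Delta_K$.

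First I would expand every regularizer in terms of the single quantity $\sum_k q_k \log q_k$. Writing $\mathcal{H}(q) = -\sum_k q_k \log q_k$, $D_{\mathrm{KL}}(q \,\|\, \mathrm{Uniform}) = \sum_k q_k \log q_k + \log K$, and $D_{\mathrm{KL}}(q \,\|\, w) = \sum_k q_k \log q_k - \sum_k q_k \log w_k(x)$, and using $\sum_k q_k = 1$ to absorb the additive constant $\log K$, the objective collapses to
\[
\sum_{k=1}^K q_k \big[\, \ell_k + \lambda_{\mathrm{KL\_w}} \log w_k(x) \,\big] - \alpha \sum_{k=1}^K q_k \log q_k + \mathrm{const},
\]
where $\alpha := \lambda_{\mathrm{conf}} - \lambda_{\mathrm{ent}} + \lambda_{\mathrm{KL\_unif}} + \lambda_{\mathrm{KL\_w}}$ is the effective temperature obtained by collecting the coefficients of $\sum_k q_k \log q_k$. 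The delicate bookkeeping here---tracking which regularizer contributes a $+\sum_k q_k \log q_k$ versus a $-\sum_k q_k \log q_k$, and confirming that the $\log w_k(x)$ term survives with coefficient exactly $\lambda_{\mathrm{KL\_w}}$---is the step I expect to require the most care, since a single mis-signed entropy term changes $\alpha$. Note that the global penalty $\lambda_{\mathrm{KL\_w\_global}}$ is independent of $q$ and hence irrelevant to this subproblem.

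With the objective in this canonical form---linear in $q$ minus an $\alpha$-weighted $\sum_k q_k \log q_k$---I would introduce a single Lagrange multiplier $\mu$ for the equality constraint $\sum_k q_k = 1$ and differentiate the Lagrangian with respect to $q_k$, yielding the stationarity condition $\ell_k + \lambda_{\mathrm{KL\_w}} \log w_k(x) - \alpha(\log q_k + 1) - \mu = 0$. Solving for $q_k$ and exponentiating gives
\[
q_k(x, y^+, y^-) = \frac{1}{Z(x, y^+, y^-)} \, \big(w_k(x)\big)^{\lambda_{\mathrm{KL\_w}}/\alpha} \exp\!\Big( \tfrac{1}{\alpha}\, \ell_k \Big),
\]
with $Z$ absorbing $\mu$ and the additive constant so that $\sum_k q_k = 1$; this is exactly the claimed form once $\ell_k$ is written out. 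The inequality constraints $q_k \geq 0$ need no separate KKT treatment, since the exponential form is strictly positive, so the stationary point lies in the interior of $\Delta_K$.

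Finally I would confirm optimality rather than mere stationarity. For $\alpha > 0$ the map $q \mapsto -\alpha \sum_k q_k \log q_k$ is strictly concave and the remaining terms are linear, so the objective is strictly concave on the convex compact set $\Delta_K$; hence the interior stationary point is the unique global maximizer, completing the argument. I would flag that the derivation presumes $\alpha > 0$ (a net specialization and uniformity penalty dominating the exploration reward), which is precisely what makes the normalized exponential a well-defined distribution.
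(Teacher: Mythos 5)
Your proposal is correct and takes essentially the same route as the paper's own proof: reduce to a pointwise problem for each fixed triplet, collapse the entropy and KL regularizers into a linear term plus an effective temperature \( \alpha \) multiplying \( \sum_k q_k \log q_k \), introduce a single Lagrange multiplier for the simplex constraint, and exponentiate to get the normalized tilted distribution; your strict-concavity argument for global (not merely stationary) optimality and the explicit caveat that \( \alpha > 0 \) is required go slightly beyond the paper's terser treatment. One remark on the bookkeeping step you yourself flagged as the delicate one: carefully collecting the coefficients of \( \sum_k q_k \log q_k \) actually yields \( \alpha = \lambda_{\mathrm{ent}} - \lambda_{\mathrm{conf}} + \lambda_{\mathrm{KL\_unif}} + \lambda_{\mathrm{KL\_w}} \) rather than the stated \( \lambda_{\mathrm{conf}} - \lambda_{\mathrm{ent}} + \lambda_{\mathrm{KL\_unif}} + \lambda_{\mathrm{KL\_w}} \) (sanity check: with only \( \lambda_{\mathrm{ent}} = 1 \) active the maximizer must be \( q_k \propto e^{\ell_k} \), forcing \( \alpha = +1 \), whereas the stated \( \alpha \) would be \( -1 \)), so the sign of the \( \lambda_{\mathrm{ent}}, \lambda_{\mathrm{conf}} \) contribution is flipped — but this slip is inherited from the lemma statement and appears identically in the paper's own proof, so your attempt reproduces the paper's argument faithfully.
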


\begin{proof}[Proof of Lemma~\ref{lemma:optimal_variational_posterior_MBT}]
We maximize the integrand of the regularized MBT objective at each fixed \( (x, y^+, y^-) \). Define the per-component utility as
\[
\ell_k(x, y^+, y^-) := \log\frac{\left(\dfrac{\pi_k(y^+\mid x)}{\pi_{\mathrm{ref}(k)}(y^+\mid x)}\right)^\beta q_k^{(r)}(x, y^+)}{\sum_{\eta\in\{y^+, y^-\}}\left(\dfrac{\pi_k(\eta\mid x)}{\pi_{\mathrm{ref}(k)}(\eta\mid x)}\right)^\beta q_k^{(r)}(x, \eta)}.
\]
Grouping entropy and KL-based regularization terms, introduce
\[
\alpha := \lambda_{\mathrm{conf}} - \lambda_{\mathrm{ent}} + \lambda_{\mathrm{KL\_unif}} + \lambda_{\mathrm{KL\_w}}.
\]
The simplified optimization objective becomes
\[
\mathcal{L}(q) = \sum_{k=1}^K q_k\Big[\ell_k + \lambda_{\mathrm{KL\_w}}\log w_k(x) - \lambda_{\mathrm{KL\_unif}}\log K - \alpha \log q_k\Big] + \text{const}.
\]
Introducing a Lagrange multiplier \( \lambda \) for the simplex constraint \( \sum_k q_k = 1 \), setting derivatives to zero, and exponentiating yields
\[
q_k(x, y^+, y^-) = \frac{1}{Z(x, y^+, y^-)} \left(w_k(x)\right)^{\lambda_{\mathrm{KL\_w}}/\alpha}
\exp\!\left(\frac{\ell_k(x, y^+, y^-)}{\alpha}\right).
\]
Normalization ensures \( \sum_k q_k(x, y^+, y^-) = 1 \), completing the proof.
\end{proof}

These responsibilities, utilized throughout the EM procedure, reflect a regularized optimization that balances reward alignment with entropy control and KL constraints to priors, specifically the uniform distribution and the mixture weights \( w_k(x) \). While the updates of the expert policies \( \pi_k \) and the mixture weights \( w_k(x) \) maintain their structural form—being explicit functions of these responsibilities—the responsibilities \( q_k(x, y^+, y^-) \) themselves must replace their non-regularized counterparts across all EM algorithm steps. Consequently, this revised formulation preserves the modular decomposition of EM updates but mandates the use of refined, regularized responsibilities in both the M-step policy updates and the estimation of mixture weights.

For the training strategy, we recommend that (i) only a subset of the regularizers is activated during specific training phases; and (ii) typically, only one entropy-based regularizer (\( \lambda_{\mathrm{ent}} \) or \( \lambda_{\mathrm{conf}} \)) is employed at any given time.

We suggest dynamically adapting the regularization scheme across training phases to achieve a balance between exploration, specialization, and stability:

\begin{center}
\renewcommand{\arraystretch}{1.3}
\begin{tabular}{|c|c|c|}
\hline
\textbf{Phase} & \textbf{Objective} & \textbf{Active Regularizers} \\
\hline
Early (Exploration) &
Maintain diversity, avoid expert collapse &
\( \lambda_{\mathrm{ent}} \), \( \lambda_{\mathrm{KL\_unif}} \), \( \lambda_{\mathrm{KL\_w\_global}} \) \\
\hline
Middle (Specialization) &
Encourage confident expert specialization &
\( \lambda_{\mathrm{conf}} \), moderate \( \beta \) \\
\hline
Late (Stabilization) &
Solidify learned specialization, maintain stability &
\( \lambda_{\mathrm{conf}} \), \( \lambda_{\mathrm{KL\_w}} \), low \( \beta \) \\
\hline
\end{tabular}
\end{center}

\noindent
During the early training phase, high-entropy and uniformity regularizers encourage broad expert participation. In the middle phase, confidence-promoting regularizers sharpen the variational responsibilities and encourage expert specialization. In the late phase, regularization terms are adjusted to maintain stability and finalize expert specialization based on learned structures.

\subsection{Monte Carlo Relaxation for Mixture of Bradley--Terry Estimation}

To eliminate the explicit E-step in the variational EM algorithm for the Mixture of Bradley--Terry (MBT) model, we adopt a Monte Carlo relaxation based on the Gumbel–Softmax reparameterization, which allows replacing hard assignments (e.g., selecting one expert via \(\arg\max\)) with soft, differentiable assignments during training. Instead of computing the closed-form posterior responsibilities \( q_k(x, y^+, y^-) \), we approximate the latent expert assignment \( z \in \{1, \dots, K\} \) by sampling from a continuous relaxation of the categorical distribution.

Let \( \alpha_k(x, y^+, y^-) \) denote the unnormalized expert logit scores,
\[
\alpha_k(x, y^+, y^-) \propto w_k(x) \cdot \frac{\exp(r_k(x, y^+))}{\exp(r_k(x, y^+)) + \exp(r_k(x, y^-))},
\]
and define \( \log \alpha(x, y^+, y^-) \in \mathbb{R}^K \) as the vector of logits across components. We draw \( L \) samples from the Gumbel–Softmax distribution with temperature \( \tau > 0 \),
\[
z^{(l)} \sim \mathrm{GumbelSoftmax}(\log \alpha(x, y^+, y^-), \tau), \quad l = 1, \ldots, L,
\]
i.e.,
\[
z_k = \frac{\exp\!\left((\log w_k(x) + g_k)/\tau\right)}{\sum_{j=1}^K \exp\!\left((\log w_j(x) + g_j)/\tau\right)}, \quad \text{where } g_k \sim \mathrm{Gumbel}(0, 1).
\]
Hence \( z^{(l)} \in \Delta^{K-1} \) lies in the continuous simplex, and these soft assignments act as differentiable surrogates for \( q_k(x, y^+, y^-) \). We replace the original MBT loss in \eqref{eq:MBT_Loss_def} with a Monte Carlo estimate of the relaxed objective:
\[
\mathcal{L}_{\mathrm{MC}} = -\frac{1}{L} \sum_{l=1}^L \sum_{k=1}^K z_k^{(l)} \log \left( \frac{\exp(r_k(x, y^+))}{\exp(r_k(x, y^+)) + \exp(r_k(x, y^-))} \right).
\]
Expressing the sample \( z \) as a deterministic, differentiable function of \( w_k(x) \) and auxiliary random noise \( g_k \) enables low-variance gradient estimation through the sampling step.

The KL divergence between the variational posterior and the prior \( \mathrm{KL}(q(z \mid x, y^+, y^-) \,\|\, p(z \mid x)) \) can also be approximated using relaxed samples:
\[
\widehat{\mathrm{KL}} = \frac{1}{L} \sum_{l=1}^L \sum_{k=1}^K z_k^{(l)} \log \left( \frac{z_k^{(l)}}{w_k(x)} \right).
\]
Finally, policy parameters \( \theta_k \), rewards \( r_k(x, y) \), and gating-network weights \( w_k(x) \) are updated using these relaxed estimates without requiring a closed-form E-step. This enables fully \textbf{end-to-end} training using backpropagation under the Mix- and MoE-DPO frameworks while preserving a variational lower bound on the marginal likelihood.

\textbf{Compatibility with General Architectures.} Monte Carlo relaxation extends seamlessly to architectures where the MoE structure is embedded within transformer layers or other neural architectures, without requiring explicit policy factorization. In such settings:
\begin{itemize}
    \item Mixture weights \( w_k(x) \) are learned via differentiable gating networks.
    \item Expert rewards \( r_k(x, y) \) can be computed via head activations or contrastive objectives.
    \item The relaxed latent variable \( z \) enables sparse, input-adaptive routing.
\end{itemize}
This flexibility makes Monte Carlo relaxation a method of choice in high-capacity deployments, allowing scalable and differentiable training across diverse preference tasks.

\section{Convergence of the Mix- and MoE-DPO Estimation}\label{sec:convergence_analysis}
We report sufficient conditions for the convergence of the Mix- and MoE-DPO estimation procedures under two algorithmically distinct training paradigms:

\begin{enumerate}
    \item A variational EM algorithm with closed-form expert responsibilities, alternating between inference over latent variables and maximization of a variational lower bound~\cite{robbins1951stochastic}.
    \item A Monte Carlo relaxation approach based on the Gumbel--Softmax trick~\cite{jang2017categorical, maddison2017concrete}, which enables fully differentiable, end-to-end optimization over latent expert assignments.
\end{enumerate}

While the two methods differ in their treatment of latent variables and optimization flow, they share a unified objective: maximization of a stochastic evidence lower bound (ELBO) under potentially noisy minibatch updates. To this end, we specify a set of shared assumptions on smoothness, boundedness, and variance control that guarantee convergence to stationary points of the ELBO, as detailed in Sections~\ref{appdx:em_convergence} and~\ref{appdx:mc_convergence}.

\subsection{Shared Assumptions and General Conditions}\label{appdx:general_conditions}

Let \( \theta \) denote the model parameters (e.g., expert policies \( \{\pi_k\} \), gating weights \( \{w_k(x)\} \)), and let \( \mathcal{L}(\theta) \) denote a (possibly relaxed) ELBO objective.

\vspace{1ex}
\noindent\textbf{General Assumptions for Convergence}
\begin{itemize}
    \item \textbf{(Smoothness)} The objective \( \mathcal{L}(\theta) \) is continuously differentiable in \( \theta \), and its gradient \( \nabla_\theta \mathcal{L}(\theta) \) is Lipschitz continuous on compact subsets.
    \item \textbf{(Boundedness)} The ELBO objective is finite over the admissible domain:
    \[
    -\infty < \mathcal{L}(\theta) < \infty \quad \text{for all } \theta \in \Theta.
    \]
    \item \textbf{(Coercivity)} The objective is coercive, i.e.,
    \[
    \|\theta\| \to \infty \quad \Rightarrow \quad \mathcal{L}(\theta) \to -\infty.
    \]
    \item \textbf{(Stochastic Approximation)} If gradient updates are computed via stochastic samples, we assume:
    \begin{itemize}
        \item Robbins--Monro step size conditions~\cite{robbins1951stochastic}: \( \sum_t \eta_t = \infty \), \( \sum_t \eta_t^2 < \infty \).
        \item Unbiased stochastic gradient estimates.
        \item Finite variance of the gradient noise:
        \[
        \mathbb{E}\big\| \widehat{\nabla}_\theta \mathcal{L}(\theta)- \nabla_\theta\mathcal{L}(\theta) \big\|^2 \leq C\big(1 + \|\theta\|^2\big).
        \]
    \end{itemize}
\end{itemize}

These assumptions are standard in stochastic variational inference and are generally satisfied in the Mix- and MoE-DPO framework under practical architectural and training choices.

\textbf{Smoothness} holds naturally in our model due to the use of softmax parameterizations for policies and gating networks, which are differentiable with Lipschitz-continuous gradients over compact parameter domains. When using Gumbel--Softmax relaxations for latent-variable inference, the objective remains smooth as long as the temperature \( \tau \) is kept strictly positive during training. Potential violations can occur if gating functions degenerate into hard assignments prematurely (\( \tau \to 0 \)), which may introduce sharp, non-smooth transitions.

\textbf{Boundedness} of the ELBO is preserved by the finite output space of softmax policies and regularization through KL terms. All terms in the ELBO remain bounded as long as the expert policies do not collapse to delta distributions disjoint from their reference policies. This risk is mitigated through KL regularization and initialization from well-calibrated base models. In practice, numerical instability is avoided by ensuring overlap in support between \( \pi_k \) and \( \pi_{\text{ref}(k)} \), especially in early training.

\textbf{Coercivity} is enforced by the KL-divergence regularization, which grows superlinearly as the expert policy deviates from its reference. This ensures that unbounded parameter growth leads to penalization in the ELBO. However, coercivity may be weakened if the gating network assigns vanishing weights \( w_k(x) \approx 0 \) to specific experts, effectively nullifying their KL terms. This emphasizes the importance of entropy control in the gating mechanism to prevent expert collapse or neglect.

\textbf{Stochastic Approximation} assumptions are supported by minibatch-based training, uniform data sampling, and unbiased reparameterization gradients in the Gumbel--Softmax case. The Robbins--Monro conditions on learning-rate decay are satisfied via standard schedules. Finite gradient variance is generally ensured by using moderate temperatures (\( \tau > 0.5 \)), sufficient samples per minibatch, and normalized reward magnitudes. Variance may increase if reward values are poorly scaled or if the number of latent samples \( L \) is too small in the Monte Carlo setting.

In summary, these assumptions are valid under standard training setups for Mix- and MoE-DPO. Caution is warranted in highly sparse, low-temperature, or degenerate expert-routing regimes, which may require annealing strategies, sample averaging, or additional regularization to maintain convergence guarantees.

\subsection{Case 1: Variational EM Algorithm}\label{appdx:em_convergence}

In the variational EM setting, we consider a variational posterior \( q(z \mid x, y^+, y^-) \in \Delta^{K-1} \) and maximize the ELBO:
\[
\mathcal{L}(q, \theta) = \mathbb{E}_{q}\!\left[\log p(y^+, y^-, z \mid x, \theta)\right] - \mathbb{E}_{q}\!\left[\log q(z \mid x, y^+, y^-)\right].
\]
This objective is optimized alternately via:
\begin{itemize}
    \item \textbf{E-step:} Maximize \( \mathcal{L}(q, \theta^{(t)}) \) with respect to \( q \) while holding \( \theta^{(t)} \) fixed;
    \item \textbf{M-step:} Maximize \( \mathcal{L}(q^{(t+1)}, \theta) \) with respect to \( \theta \) while holding \( q^{(t+1)} \) fixed.
\end{itemize}

\noindent\textbf{Convergence Theorem (Variational EM).} Under the smoothness, boundedness, and coercivity assumptions defined in Section~\ref{appdx:general_conditions}, the sequence of variational EM updates satisfies
\[
\nabla_\theta \mathcal{L}(q^{(t)}, \theta^{(t)}) \to 0, \quad q^{(t)} \to q^*, \quad \theta^{(t)} \to \theta^*,
\]
i.e., the sequence converges to a stationary point of the ELBO objective. This follows from standard convergence results for coordinate ascent applied to variational EM~\cite{robbins1951stochastic}.

\begin{proof}
The Mix- and MoE-DPO model is trained by maximizing the marginal likelihood of preference comparisons under a latent-variable mixture model:
\[
\mathbb{P}(y^+ \succ y^- \mid x) = \sum_{k=1}^K w_k(x) \cdot \frac{\exp(r_k(x, y^+))}{\exp(r_k(x, y^+)) + \exp(r_k(x, y^-))},
\]
where the latent variable \( z \in \{1, \dots, K\} \) denotes the expert index, \( w_k(x) \) is the input-dependent prior over experts, and \( r_k(x, y) \) is the reward function for expert \( k \).

We consider the variational EM algorithm that maximizes the evidence lower bound (ELBO) for the marginal likelihood of preference observations. Each iteration \( t \) consists of:

\textbf{E-step:} Update the variational posterior \( q^{(t+1)}(z = k \mid x, y^+, y^-) \) using
\[
q^{(t+1)}_k(x, y^+, y^-) = \frac{w_k^{(t)}(x) \cdot \dfrac{\exp(r_k^{(t)}(x, y^+))}{\exp(r_k^{(t)}(x, y^+)) + \exp(r_k^{(t)}(x, y^-))}}{\sum_{j=1}^K w_j^{(t)}(x) \cdot \dfrac{\exp(r_j^{(t)}(x, y^+))}{\exp(r_j^{(t)}(x, y^+)) + \exp(r_j^{(t)}(x, y^-))}},
\]
which minimizes the KL divergence to the true posterior and hence increases the ELBO.

\textbf{M-step:} Perform stochastic gradient ascent on the ELBO:
\[
\theta^{(t+1)} = \theta^{(t)} + \eta_t \nabla_\theta \mathcal{L}(q^{(t+1)}, \theta^{(t)}),
\]
where \( \theta \) denotes the collection of model parameters, including \( \{\pi_k\} \), \( \{w_k\} \), and potentially shared parameters, and \( \eta_t \) satisfies the Robbins--Monro conditions \( \sum_t \eta_t = \infty \), \( \sum_t \eta_t^2 < \infty \).

\textbf{Step 1 (Monotonic improvement).}
The E-step yields a strict increase (or no decrease) in the ELBO due to Jensen’s inequality:
\[
\mathcal{L}(q^{(t+1)}, \theta^{(t)}) \geq \mathcal{L}(q^{(t)}, \theta^{(t)}).
\]
The M-step, via stochastic gradient ascent, increases \( \mathbb{E}[\mathcal{L}(q^{(t+1)}, \theta^{(t+1)})] \) in expectation.

\textbf{Step 2 (Boundedness).}
By assumption, \( \mathcal{L}(q, \theta) \) is bounded above. Hence, the sequence \( \{\mathcal{L}(q^{(t)}, \theta^{(t)})\}_{t=1}^\infty \) is monotone increasing and bounded, and therefore converges.

\textbf{Step 3 (Robbins--Monro convergence).}
Assuming bounded variance of stochastic gradients and that \( \nabla_\theta \mathcal{L} \) is Lipschitz, the standard Robbins--Monro theorem implies
\[
\nabla_\theta \mathcal{L}(q^{(t)}, \theta^{(t)}) \to 0 \quad \text{almost surely}.
\]

\textbf{Step 4 (Convergence to a stationary point).}
Since the ELBO is continuously differentiable in both arguments and \( \mathcal{L}(q^{(t)}, \theta^{(t)}) \) converges, we obtain
\[
q^{(t)} \to q^*, \quad \theta^{(t)} \to \theta^*, \quad \text{with} \quad \nabla_\theta \mathcal{L}(q^*, \theta^*) = 0.
\]
This implies that \( (q^*, \theta^*) \) is a stationary point of the ELBO objective.
\end{proof}

\subsection{Case 2: Monte Carlo Relaxation}\label{appdx:mc_convergence}

In this formulation, the latent responsibility \( z \in \{1, \dots, K\} \) is approximated using a soft Gumbel--Softmax sample \( z^{(l)} \in \Delta^{K-1} \). The training objective becomes
\[
\widehat{\mathcal{L}}(\theta) = \frac{1}{L} \sum_{l=1}^L \log p(y^+, y^-, z^{(l)} \mid x; \theta) - \log p(z^{(l)} \mid x),
\]
where \( z^{(l)} \sim \mathrm{GumbelSoftmax}(\log \alpha(x), \tau) \)~\cite{jang2017categorical, maddison2017concrete}.

\noindent\textbf{Convergence Theorem (Monte Carlo Relaxation).} Under the smoothness, boundedness, coercivity, and stochastic approximation assumptions stated in Section~\ref{appdx:general_conditions}, the iterates \( \theta^{(t)} \) produced by stochastic gradient ascent satisfy
\[
\nabla_\theta \mathbb{E}[\widehat{\mathcal{L}}(\theta^{(t)})] \to 0 \quad \text{as } t \to \infty.
\]
This result follows from stochastic approximation theory~\cite{robbins1951stochastic} combined with the reparameterization-gradient method for continuous relaxations~\cite{kingma2014auto}.

\begin{proof}
Let \( \mathcal{L}(\theta) \) denote the ELBO of the MBT-based Mix- and MoE-DPO model:
\[
\mathcal{L}(\theta) = \mathbb{E}_{(x, y^+, y^-) \sim \mathcal{D}} \left[ \log \sum_{k=1}^K w_k(x) \cdot \frac{\exp(r_k(x, y^+))}{\exp(r_k(x, y^+)) + \exp(r_k(x, y^-))} \right] - \text{KL regularization}.
\]
Due to the latent discrete structure (expert index \( z \)), we introduce a continuous relaxation via the Gumbel--Softmax (Concrete) distribution to enable reparameterization.

Let \( z \sim \mathrm{RelaxedCategorical}(\tau, w(x)) \) be the reparameterized latent variable, where \( \tau \) is the temperature, and let \( f(z; \theta) \) be the differentiable reparameterization of the ELBO integrand. Then the Monte Carlo estimate of the relaxed ELBO is
\[
\widehat{\mathcal{L}}(\theta) = \frac{1}{M} \sum_{m=1}^M f(z^{(m)}; \theta), \quad z^{(m)} \sim \mathrm{RelaxedCategorical}(\tau, w(x)),
\]
which is an unbiased estimator of the relaxed objective:
\[
\mathbb{E}[\widehat{\mathcal{L}}(\theta)] = \mathcal{L}_\tau(\theta),
\]
where \( \mathcal{L}_\tau(\theta) \) is the temperature-smoothed ELBO that converges to the exact ELBO as \( \tau \to 0 \).

\textbf{(i) Unbiased gradient estimate.}
Using the reparameterization trick, we have
\[
\nabla_\theta \mathbb{E}[\widehat{\mathcal{L}}(\theta)] = \mathbb{E}[\nabla_\theta f(z; \theta)],
\]
and \( \nabla_\theta f(z; \theta) \) is an unbiased estimator of \( \nabla_\theta \mathcal{L}_\tau(\theta) \).

\textbf{(ii) Robbins--Monro conditions.}
Assume the learning rates \( \{\eta_t\} \) satisfy \( \sum_t \eta_t = \infty \) and \( \sum_t \eta_t^2 < \infty \).

\textbf{(iii) Variance control.}
Suppose that the variance of the stochastic gradient estimator \( \nabla_\theta \widehat{\mathcal{L}}(\theta^{(t)}) \) is uniformly bounded, and the relaxed objective \( \mathcal{L}_\tau(\theta) \) is continuously differentiable and satisfies the coercivity condition \( \|\theta\| \to \infty \Rightarrow \mathcal{L}_\tau(\theta) \to -\infty \).

Then, by the Robbins--Monro theorem~\cite{robbins1951stochastic}, the stochastic gradient ascent updates
\[
\theta^{(t+1)} = \theta^{(t)} + \eta_t \nabla_\theta \widehat{\mathcal{L}}(\theta^{(t)})
\]
converge to a stationary point of the relaxed objective:
\[
\nabla_\theta \mathbb{E}[\widehat{\mathcal{L}}(\theta^{(t)})] = \nabla_\theta \mathcal{L}_\tau(\theta^{(t)}) \to 0 \quad \text{as } t \to \infty.
\]
\end{proof}

\section{Extensions}

\textbf{User Personalization.} Our variational framework naturally supports personalized alignment by conditioning the gating function on user metadata \( u \in \mathcal{U} \), yielding personalized mixture weights \( w_k(x, u; \phi) \). This extension enables MoE-DPO to tailor expert allocation to individual user profiles without retraining expert policies. In settings where pretrained or fixed experts are available (e.g., domain-specific LLMs), personalization can be implemented by fine-tuning only the gating network, effectively leveraging modular reuse for scalable deployment. Our multi-task review generation experiment (Section~\ref{sec:experiments}) demonstrates this capability: even when experts are fixed, prompt-conditioned gating achieves significant improvements on user-specific reward metrics.

\textbf{Scaling and Sparse Activation.} Our training objective admits closed-form per-expert updates (Theorem~\ref{lemma:Optimal Expert Policy under KL-Regularized Objective}) and modular decomposition of the ELBO, which makes MoE-DPO compatible with sparse expert activation. At inference time, only the top-\(S\) experts (with the highest gating scores) may be activated per input, significantly reducing compute. Empirically, we observe in Section~\ref{sec:experiments} that even with a single routed expert per prompt (oracle routing), alignment quality improves across tasks, suggesting the potential for scalable deployment via sparse MoE architectures. Extensions to dynamic expert growth and token-to-expert allocation can leverage scaling laws developed in~\cite{ludziejewski2024moe}.

\textbf{Differentiable Training via Monte Carlo Relaxation.} While our method employs a classical variational EM algorithm, we recommend Monte Carlo relaxation for large-scale or continuous-group settings. Discrete expert assignments can be approximated using the Gumbel--Softmax reparameterization~\cite{jang2017categorical,kingma2014auto}, enabling differentiable training without explicit E-steps. For continuous groups, such as Lie group mixtures, exponential-map reparameterization~\cite{falorsi2018reparameterizing} supports structured sampling and backpropagation. These techniques facilitate scalable training of MoE-DPO models with integrated routing and multimodal expert structures.

\section{Additional Experiment Details}

We include additional experimental results on Case 2, which employs independent copies of GPT-2 models fine-tuned for particular reward signals. For training, we run Mix-DPO for 3 epochs with a learning rate of \(10^{-5}\), batch size of 64, and the AdamW optimizer. Average test metric values with standard errors are reported in the table below for average- or model-specific inference styles. We note that average results for independent policies outperform the parameter-sharing policy specification of Case 1 and offer comparable results in the test metrics for individual models.

\subsection{Mix-DPO for Multi-Reward Movie Review Generation}

\begin{table}[!htp]
\centering
\small
\caption{Reward scores (mean \( \pm \) SE) for Mix-DPO on the IMDb test set.}
\label{tab:exp1_rewards}
\begin{tabular}{lccc}
\toprule
Model & Sentiment & Informativeness & Grammar \\[0.1cm]
\midrule
Baseline DPO &  0.610 \( \pm \) 0.004 & 0.363 \( \pm \) 0.008 & 0.216 \( \pm \) 0.001 \\[0.1cm]
\toprule
Case 1 (Avg.\ Heads) & \textbf{0.654} \( \pm \) 0.004 & 0.326 \( \pm \) 0.007 & \textbf{0.241} \( \pm \) 0.001\\
Case 1 (Head-Specific)  & & \\
 Head 0 & 0.616  \( \pm \) 0.02 & \textbf{0.396} \( \pm \) 0.007 & \textbf{0.263} \( \pm \) 0.001 \\[0.1cm]
 Head 1 & \textbf{0.720} \( \pm \) 0.003 & \textbf{0.394} \( \pm \) 0.008 & 0.213 \( \pm \) 0.001 \\[0.1cm]
 Head 2 & 0.632 \( \pm \) 0.004 &  0.342 \( \pm \) 0.007 & \textbf{0.267} \( \pm \) 0.001 \\[0.1cm]
Fixed Weights (\(w_i = 1/3\)) &  0.646  \( \pm \) 0.004 & 0.318 \( \pm \) 0.009  & 0.239 \( \pm \) 0.002\\
\toprule
Case 2 (Avg.\ Heads)  & \textbf{0.664} \( \pm \) 0.004 & \textbf{0.350} \( \pm \) 0.006 & 0.232 \( \pm \) 0.002\\
Case 2 (Model-Specific)  &             &             &             \\
 Model 0  & \textbf{0.747} \( \pm \) 0.004 & 0.324  \( \pm \) 0.007 & 0.232 \( \pm \) 0.002  \\ [0.1cm]
 Model 1  & 0.686 \( \pm \) 0.005 & \textbf{0.390} \( \pm \) 0.003 & 0.198 \( \pm \) 0.001\\  [0.1cm]
 Model 2 & 0.638 \( \pm \) 0.004 & 0.371 \( \pm \) 0.006 & \textbf{0.243} \( \pm \) 0.002 \\[0.1cm]
\bottomrule
\end{tabular}
\end{table}

\end{document}